\newtheorem{theorem}{Theorem}[section]
\newtheorem{corollary}[theorem]{Corollary}
\newtheorem{lemma}[theorem]{Lemma}
\newcommand{\mdp}{\mathcal{M}}
\newcommand{\states}{\mathcal{S}}
\newcommand{\actions}{\mathcal{A}}
\newcommand{\probs}{\mathcal{P}}
\newcommand{\cardstates}{S}
\newcommand{\cardactions}{A}
\newcommand{\rewards}{r}
\newcommand{\genericstate}{s}
\newcommand{\altstate}{q}
\newcommand{\genericaction}{a}
\newcommand{\genericreward}{r}
\newcommand{\initialstate}{\genericstate_{0}}
\newcommand{\deadstate}{\genericstate_{\circ}}
\newcommand{\policy}{\pi}
\newcommand{\stateactionrewardpath}{\xi}
\newcommand{\valuefunction}[1]{V^{#1}}
\newcommand{\target}{t}
\newcommand{\behavior}{b}
\newcommand{\behaviorpolicy}{\policy^{b}}
\newcommand{\targetpolicy}{\policy^{t}}
\newcommand{\optimalpolicy}{\policy^{*}}
\newcommand{\processcontinueprob}{\gamma}
\newcommand{\expectation}[1]{\mathbb{E}\left[ #1 \right]}
\newcommand{\occupancymeasure}{x}
\newcommand{\numberofpaths}{N}
\newcommand{\numberofsampletransitions}{n}
\newcommand{\reachprob}{\rho}
\newcommand{\loopprob}{\lambda}
\newcommand{\failureprob}{\delta}
\newcommand{\optimalitygap}{\varepsilon}
\newcommand{\pathdist}[1]{\Gamma^{#1}}
\newcommand{\apath}{\xi}
\title{On the Sample Complexity of Vanilla Model-Based Offline Reinforcement Learning with Dependent Samples}
\author{
    Mustafa O. Karabag,
    Ufuk Topcu
}
\begin{document}

\maketitle

\begin{abstract}
Offline reinforcement learning (offline RL) considers problems where learning is performed using only previously collected samples and is helpful for the settings in which collecting new data is costly or risky. In model-based offline RL, the learner performs estimation (or optimization) using a model constructed according to the empirical transition frequencies. We analyze the sample complexity of vanilla model-based offline RL with dependent samples in the infinite-horizon discounted-reward setting. In our setting, the samples obey the dynamics of the Markov decision process and, consequently, may have interdependencies. Under no assumption of independent samples, we provide a high-probability, polynomial sample complexity bound for vanilla model-based off-policy evaluation that requires partial or uniform coverage. We extend this result to the off-policy optimization under uniform coverage. As a comparison to the model-based approach, we analyze the sample complexity of off-policy evaluation with vanilla importance sampling in the infinite-horizon setting. Finally, we provide an estimator that outperforms the sample-mean estimator for almost deterministic dynamics that are prevalent in reinforcement learning.  
\end{abstract}

\section{Introduction}

Offline reinforcement learning (RL) considers problems where a learner has access to only a dataset that is collected under a \textit{behavior policy} in an environment and tries to evaluate (or optimize) a \textit{target policy}. The learner typically has no control over the behavior policy, and the transition dynamics of the environment are unknown to the learner. Offline RL is helpful for settings where online learning may not be safe or previously collected data are abundant. The applications of offline RL include, but are not limited to healthcare~\cite{shortreed2011informing,tseng2017deep}, robotics~\cite{levine2018learning,ebert2018visual,zeng2018learning}, natural language processing~\cite{zhou2017end,henderson2008hybrid}, and recommendation systems~\cite{swaminathan2017off,gilotte2018offline}.

We develop theoretical guarantees for offline RL. In detail, we use an infinite horizon Markov decision process (MDP) to model the environment and analyze the number of sample paths sufficient to achieve the desired accuracy for off-policy evaluation. We mainly focus on vanilla model-based off-policy evaluation, where a target policy is evaluated through a model based on the sample mean estimator of the transition dynamics.  

Analyzing the theoretical properties of model-based off-policy evaluation is challenging due to the sequential nature of the MDP model and potentially dependent samples. These factors make model-based off-policy evaluation lack the unbiasedness property that importance-sampling-based off-policy methods have~\cite{levine2018learning}. 
The first source of bias is because the expected value is a non-linear function of the transition probabilities. 
Under the assumption that the transition probability estimates are unbiased, the bias in the value function estimate vanishes asymptotically with the increasing number of sample transitions. However, this bias is present with any finite number of samples~\cite{mannor2004bias}. 
The second source of bias is because of potentially biased transition probability estimates.
In reality, the sample transitions come from time series data and are not necessarily independent. The sample mean estimator, consequently, is not guaranteed to be unbiased. 
Quantifying this bias requires knowing the model, which contradicts the motivations of RL.

We consider that the dataset is constructed using sample paths that are executions of an MDP under the behavior policy and derive a sample complexity upper bound for model-based off-policy evaluation. 
We overcome the first source of bias by using a robust MDP~\cite{NE:05} that includes the true MDP with high probability. 
To overcome the second source of bias, we use a concentration bound that can handle random stopping times potentially dependent on the previous samples.
We combine these methods and derive a sufficient condition on the number of sample paths to be collected to accurately estimate the value of the target policy with high probability. 
The bound shows that the vanilla model-based off-policy evaluation has performance guarantees under both partial and uniform coverage.
We extend this sample complexity result to off-policy optimization under uniform coverage. In addition, as a comparison, we derive a sufficient condition on the number of samples for the vanilla importance sampling method. 
Finally, we give an estimator that outperforms the sample mean estimator in settings where transition dynamics of the MDP is almost deterministic, i.e., there is a probable next state for every state and action. 

The main contributions of this paper are threefold:
\begin{enumerate}
\item We derive sufficient conditions on the number of sample paths for vanilla model-based off-policy evaluation and optimization. These bounds do not assume independence between the sample transitions.
\item We derive a sufficient condition on the number of sample paths for the importance-sampling-based off-policy evaluation in the infinite-horizon discounted reward setting.
\item We provide a new estimator for the transition probabilities that outperforms the sample mean estimator for the environments with a limited amount of stochasticity.
\end{enumerate}
We remark that for the first two contributions, we aim to analyze the performance of vanilla off-policy methods in the discounted infinite horizon setting rather than building new algorithms with optimal sample complexities. 

The rest of the paper is organized as follows. Section \ref{sec:prelim} gives preliminaries for MDPs. We describe the off-policy evaluation and optimization problems in Section \ref{sec:offlinerl} and discuss the bias issues in model-based offline learning. In Section \ref{sec:theory}, we give the sample complexity results for the vanilla off-policy methods. We describe a new estimator in Section \ref{sec:dfestimator} for almost-deterministic random variables and analyze the performance of this estimator. We give the related work in Section \ref{sec:relatedwork}.

\section{Preliminaries} \label{sec:prelim}
A Markov decision process (MDP) is a tuple \(\mdp = (\states, \actions, \probs, \rewards, \initialstate)\) where \(\states\) is the set of states, \(\actions\) is the set of actions, \(\probs(\genericstate, \genericaction, \altstate)\) is the transition probability form state \(\genericstate\) to \(\altstate\) under action \(\genericaction\), \(\rewards(\genericstate, \genericaction)\) is the (random) reward of action \(\genericaction\) at state \(\genericstate\), and \(\initialstate\) is the initial state. We assume that the reward is normalized, i.e., \(0 \leq \expectation{\rewards(s,a)} \leq 1\) for all \(\genericstate \in \states\), and \(\genericaction \in \actions\). \(\cardstates\) denotes the cardinality of \(\states\) and \(\cardactions\) denotes the cardinality of \(\actions\).
An absorbing state \(\genericstate\) transitions to itself under every action and has \(0\) reward, i.e., \(\probs(\genericstate, \genericaction, \genericstate)=1\) and \(\rewards(\genericstate, \genericaction) = 0\) for all \(\genericaction \in \actions\). 
A (stationary) policy \(\policy\) assigns the same probabilities to actions given the state at every time step; \(\policy(\genericstate, \genericaction)\) denotes the probability of taking action \(\genericaction\) at state \(\genericstate\). 
An (infinite) path \(\stateactionrewardpath = \genericstate_{0} \genericaction_{0} \genericreward_{0} \genericstate_{1} \genericaction_{1} \genericreward_{1} \ldots\) is a sequence of states, actions, and rewards. 
The value function \(\valuefunction{\policy}_{\mdp}(\genericstate)\) denotes the expected total reward under policy \(\policy\) starting from \(\genericstate\), i.e., \(\valuefunction{\policy}_{\mdp}(\genericstate) = \expectation{\sum_{t=0}^{\infty} \rewards(\genericstate_{t}, \genericaction_{t}) } \) where the expectation is over the randomness of the policy, transition dynamics, and rewards. 

The occupancy measure \(\occupancymeasure^{\policy}(\genericstate, \genericaction)\) denotes the expected number of times that action \(\genericaction\) is taken at state \(\genericstate\) under policy \(\policy\) starting from \(\initialstate\). 
Due to the linearity of expectation, we have \(\valuefunction{\policy}_{\mdp}(\initialstate) = \sum_{\genericstate \in \states,  \\ \genericaction \in \actions} \occupancymeasure^{\policy}(\genericstate, \genericaction) \expectation{ \rewards(\genericstate, \genericaction) } \). Define \(\reachprob^{\policy}(\genericstate, \genericaction)\) as the probability of taking action \(\genericaction\) at state \(\genericstate\) at least once under stationary policy \(\policy\) starting from \(\initialstate\). Also, define \(\loopprob^{\policy}(\genericstate, \genericaction)\) as the probability of taking action \(\genericaction\) at state \(\genericstate\) again under stationary policy \(\policy\) given that the current state is \(\genericstate\) and current action is \(\genericaction\). Due to the Markovianity of the transition dynamics and the stationarity of policy \(\policy\), we have
\begin{align*}
    \occupancymeasure^{\policy}(\genericstate, \genericaction)&= \reachprob^{\policy}(\genericstate, \genericaction) \sum_{i=1}^{\infty} (1-\loopprob^{\policy}(\genericstate, \genericaction)) \loopprob^{\policy}(\genericstate, \genericaction)^{i-1}i 
    \\
    &= \frac{\reachprob^{\policy}(\genericstate, \genericaction)}{1-\loopprob^{\policy}(\genericstate, \genericaction)}
\end{align*} where \(i\) represents the number of times \((\genericstate, \genericaction)\) is used.

\section{Offline Reinforcement Learning Problem}
\label{sec:offlinerl}
We consider two offline reinforcement learning problems. 
The first problem is off-policy evaluation where the goal is to estimate the value \(\valuefunction{\targetpolicy}_{\mdp}(\initialstate)\) of a known stationary target policy \(\targetpolicy\) given \(\numberofpaths\) sample paths that are collected under a known stationary behavior policy \(\behaviorpolicy\). 
The second problem is off-policy optimization where the goal is to synthesize an optimal policy \(\policy^{*}\) that maximizes the value function \(\valuefunction{\optimalpolicy}_{\mdp}(\initialstate)\) given \(\numberofpaths\) sample paths that are collected under a known stationary  behavior policy \(\behaviorpolicy\). 

For both of these problems, we assume that there exists an absorbing final state \(\deadstate\) such that every state in \(\states \setminus \lbrace \deadstate \rbrace\) transitions to \(\deadstate\) with probability \(1 - \processcontinueprob\) under every action. 
State \(\deadstate\) represents the effective end of the path. 
We note that transitioning to \(\deadstate\) with a fixed probability is equivalent to having a discount factor \(\processcontinueprob\) and ensures the boundedness of the value function. 
On the other hand, the setting we consider is more disadvantaged compared to having infinite-length paths with discounted rewards  since sample paths eventually end up at \(\deadstate\) and the learner cannot access to further sample transitions from the other states.

\subsection{Vanilla Model-Based Offline Learning}
We analyze the vanilla model-based approach for the aforementioned offline reinforcement learning problems. In this section, we describe the model construction.

\begin{figure}
    \centering
    \subfloat[\centering ]{\label{fig:twostatemdp}		\begin{tikzpicture}[node distance = 1.2cm]
		    \tikzset{
	->, 
	>=stealth', 
	initial text=$ $, 
}
		
		    \node[state, initial] (initialstate) {$s^{0}$};
		    \node[state][above right= of initialstate] (secondstate) {$s^{1}$};
		    \node[state][below right= of initialstate] (deadstate) {$\deadstate$};	
			\draw 			
			(initialstate) edge[bend right = 10] node[fill=white] {\footnotesize $a, \sigma $} (secondstate)
			(initialstate) edge[loop above] node {\footnotesize $a, \processcontinueprob-\sigma$} (initialstate)
			(initialstate) edge[bend left = 10] node[fill=white] {\footnotesize $a, 1-\processcontinueprob$} (deadstate)
			(secondstate) edge node[fill=white] {\footnotesize $a, 1-\gamma$} (deadstate)
			(secondstate) edge[loop right] node {\footnotesize $a, \gamma$} (secondstate)
			(deadstate) edge[loop right] node {\footnotesize $a, 1$} (deadstate);
		\end{tikzpicture}}%
    \subfloat[\centering ]{\label{fig:biasgraph} \input{biasplot} }%
    \caption{(a) An MDP with a single action. A label $a,p$ of a directed edge from $\genericstate$ to $\altstate$ means $\probs(\genericstate, \genericaction, \altstate) = p$. (b) The expected sample mean estimation \(\expectation{\hat{\sigma}}\) of \(\sigma\) given a single path when \(\processcontinueprob = 1\).} 
\end{figure}

The model construction is fairly simple; we utilize the sample mean estimator to estimate the transition probabilities and rewards. Formally, let \(\numberofsampletransitions(\genericstate, \genericaction, \altstate)\) be the total number of sample transitions in the sample paths from state \(\genericstate\) to state \(\altstate\) under action \(\genericaction\). Let  \(\tilde{\probs}(\genericstate, \genericaction, \altstate) = \ \numberofsampletransitions(\genericstate, \genericaction, \altstate) / \left(  \sum_{\altstate \in \states} \numberofsampletransitions(\genericstate, \genericaction, \altstate) \right) \) denote the empirical frequency of transitioning from \(\genericstate\) to \(\altstate\) under \(\genericaction\). The estimated transition probabilities \(\hat{\probs}(\genericstate, \genericaction, \altstate)\) minimize the \(L_{2}\) distance to the empirical frequencies subject to the constraint \(\hat{\probs}(\genericstate, \genericaction, \deadstate) = 1- \processcontinueprob.\) If state \(\genericstate\) has no sample transition, we set \(\hat{\probs}(\genericstate, \genericaction, \genericstate) = \processcontinueprob \) for all \(\genericaction \in \actions\). For simplicity, we assume that the mean reward \(\expectation{\rewards(\genericstate, \genericaction)}\) is known for all \(\genericstate \in \states\) and \(\genericaction \in \actions\). The results we present in this paper can be extended to the unknown reward case by considering a sample mean estimator for the rewards as well.

Given the estimated model \(\hat{\mdp} = (\states, \actions, \hat{\probs}, \rewards, \initialstate)\) and the target policy \(\targetpolicy\), the value of the target policy can be estimated by solving a set of equations or by value iteration. We assume that this computation can be performed exactly. \(\valuefunction{\targetpolicy}_{\hat{\mdp}}(\initialstate)\) denotes the model-based value estimate for \(\targetpolicy\).

\subsection{Bias Issues in Model-Based Offline Learning}

The main challenges associated with model-based off-policy evaluation are due to the biases in the estimation of transition probabilities and the estimation of the value function.
\paragraph{Bias in the estimation of transition probabilities} We note that the sample-mean estimator is the maximum likelihood estimator of the transition probabilities. However, this estimator is biased since the outcomes of the future samples are dependent on the previous samples. For example, consider the MDP given in Figure \ref{fig:twostatemdp}. A transition from \(s^{0}\) to \(s^{1}\) implies that all previous transitions are from \(s^{0}\) to \(s^{0}\). The (potential) dependencies between the samples, i.e., the dependency between the number of samples and the outcomes of the sample transitions, make the sample mean estimate biased. As shown in Figure \ref{fig:biasgraph}, the bias of the sample mean estimator can be as large as \(0.22\) given a single sample path. In general, this bias occurs if the successor states of an origin state have different return probabilities to the origin state. 

While the model-based off-policy estimation is provably good when the estimates for the transition probabilities are unbiased~\cite{mannor2004bias}, it is challenging to obtain unbiased estimates with low variance since the learning data usually consists of samples that have dependencies. A provably unbiased estimator utilizes only the first sample from the origin state. However, this estimator suffers from high variance due to the low number of samples. Another option to overcome the bias issue is fixing a number of samples per state-action pair as in the PAC-MDP literature~\cite{strehl2008analysis}. However, this approach may result in a large number of ``unknown'' state-action pairs and is wasteful in that not all samples are utilized. 

We overcome the bias issue by considering a concentration bound that can work with a random number of samples and handle the dependencies between the outcomes.

\paragraph{Bias in the value function estimation} Even when the transition probability can be estimated without a bias, the estimate for the value function is, in general, biased. Given the estimates for the transition probabilities are unbiased, the bias and variance of the value function estimate vanish as the number of samples per state approach to \(\infty\)~\cite{mannor2004bias}. A way to overcome the bias in the value function estimation is to use a robust MDP model that uses a possible set of transition probabilities~\cite{yu2020mopo,yu2021combo}. The robust model is then used to compute upper and lower bounds on the value function. We also follow this approach and use a robust MDP to show that the value function estimate is accurate with high probability. 

 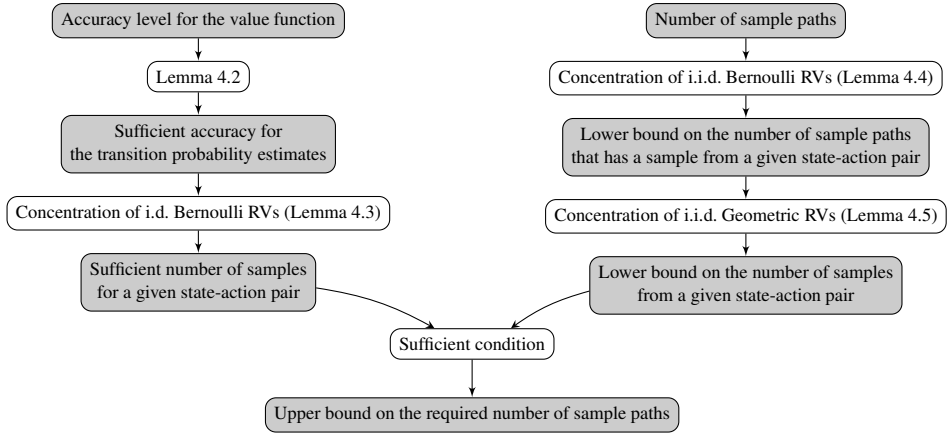
\begin{figure*}%
    \centering
    \tikzstyle{line} = [draw, -latex']
\begin{tikzpicture}[node distance = 1cm, auto]
\linespread{0.8}
\node[draw, rectangle, rounded corners,fill=white!80!black,  align=center] (numOfSamplePaths) { \scriptsize Number of sample paths};
\node[draw, rectangle, rounded corners, align=center, below=0.3cm of numOfSamplePaths] (concOfIndBer){\scriptsize Concentration of i.i.d. Bernoulli RVs (Lemma \ref{lemma:numberofpathswithsample})};
\node[draw, rectangle, rounded corners,fill=white!80!black,  align=center,  below=0.3cm of concOfIndBer] (numOfReachingPaths) {\scriptsize Lower bound on the number of sample paths \\ \scriptsize that has a sample from a given state-action pair};
\node[draw, rectangle, rounded corners, align=center, below=0.3cm of numOfReachingPaths] (concOfGeo){\scriptsize Concentration of i.i.d. Geometric RVs (Lemma \ref{lemma:numberofsamplesperstateaction})};
\node[draw, rectangle, rounded corners,fill=white!80!black,  align=center,  below=0.3cm of concOfGeo] (numOfSamples) {\scriptsize Lower bound on the number of samples\\ \scriptsize from a given state-action pair};
\node[draw, rectangle, rounded corners,fill=white!80!black,  align=center, left=4cm of numOfSamplePaths] (accVal) {\scriptsize Accuracy level for the value function};
\node[draw, rectangle, rounded corners,  align=center, below=0.3cm of accVal] (ourLemma){\scriptsize Lemma \ref{lemma:closemdpcloseestimate}};
\node[draw, rectangle, rounded corners,fill=white!80!black,  align=center, below=0.3cm of ourLemma] (accTran) {\scriptsize Sufficient accuracy for\\\scriptsize the transition probability estimates};
\node[draw, rectangle, rounded corners,  align=center, below=0.3cm of accTran] (concOfDepBer){\scriptsize Concentration of i.d. Bernoulli RVs (Lemma \ref{lemma:numberofsamplesfromsaq})};
\node[draw, rectangle, rounded corners, fill=white!80!black,  align=center, below=0.3cm of concOfDepBer] (reqSamples) {\scriptsize Sufficient number of samples\\\scriptsize for a given state-action pair};
\node[draw, rectangle, rounded corners,  align=center, below = 0.2cm of $(numOfSamples.south)!0.5!(reqSamples.south)$] (sufficientCond) {\scriptsize Sufficient condition};
\node[draw, rectangle, rounded corners, fill=white!80!black, align=center, below = 0.5cm of sufficientCond] (upperBoundSamples) {\scriptsize Upper bound on the required number of sample paths};
\path [line] (numOfSamplePaths) -- (concOfIndBer);
\path [line] (concOfIndBer) -- (numOfReachingPaths);
\path [line] (numOfReachingPaths) -- (concOfGeo);
\path [line] (concOfGeo) -- (numOfSamples);
\path [line] (accVal) -- (ourLemma);
\path [line] (ourLemma) -- (accTran);
\path [line] (accTran) -- (concOfDepBer);
\path [line] (concOfDepBer) -- (reqSamples);
\draw[-stealth] (numOfSamples) edge [bend right=10] (sufficientCond);
\draw[-stealth] (reqSamples) edge [bend left=10] (sufficientCond);
\path [line] (sufficientCond) -- (upperBoundSamples);
\end{tikzpicture} 
    \caption{The flowchart for the proof of Theorem \ref{thm:numberofrequiredpaths}. Gray boxes are the relevant quantities, and white boxes are the relations between these quantities. }
    \label{fig:flowcharttikz}
\end{figure*}

\section{Theoretical Guarantees for Vanilla Model-Based Off-Policy Evaluation and Optimization}
\label{sec:theory}

In this section, we analyze the performance of vanilla model-based off-policy evaluation and optimization. 

We derive a bound on the number \(\numberofpaths\) of required sample paths that relates the estimation accuracy to the distance between the behavior and target policies. The bound is polynomial in the statistics of the behavior and target policies, and the size of the MDP. 

\begin{theorem} \label{thm:numberofrequiredpaths}
Let \(\numberofpaths\) be the number of sample paths that are independently collected under \(\behaviorpolicy\). Define \[D = \left\lbrace (\genericstate, \genericaction) \bigg|  \occupancymeasure^{\targetpolicy}(\genericstate, \genericaction) \geq \frac{(\nicefrac{\optimalitygap}{2})^{\nicefrac{1}{\beta}} (1-\processcontinueprob)^{\nicefrac{(2-\beta)}{\beta}}}{\cardstates \cardactions} \right\rbrace.\]
If \begin{align*}
    \numberofpaths \geq \tilde{\mathcal{O}} \left(  \underset{\beta \in [0,1]}{\min} \ \underset{(\genericstate, \genericaction) \in D}{\max} \left(  \frac{\cardstates^{1+2\beta}  \cardactions^{2\beta} }{(1-\processcontinueprob)^{4-2\beta}}  \cdot \frac{ \occupancymeasure^{\targetpolicy}(\genericstate, \genericaction)^{2\beta}}{ \occupancymeasure^{\behaviorpolicy}(\genericstate, \genericaction)} \cdot \frac{1 }{\optimalitygap^{2} } , \right. \right. 
    \\
    \left. \left. \frac{1}{\processcontinueprob \reachprob^{\behaviorpolicy}(\genericstate, \genericaction)}\right)\right)
\end{align*}
then with probability at least \(1- \failureprob\), we have \[|\valuefunction{\targetpolicy}_{\mdp}(\initialstate) - \valuefunction{\targetpolicy}_{\hat{\mdp}}(\initialstate)| \leq \optimalitygap\] where the dependency on \(\nicefrac{1}{\failureprob}\) is logarithmic.
\end{theorem}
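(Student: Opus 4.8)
The plan is to follow the reduction chain of Figure~\ref{fig:flowcharttikz}: a deterministic step converting the value-accuracy target into per-$(\genericstate,\genericaction)$ accuracy targets for the estimated transition kernel, a dependence-robust concentration step converting each accuracy target into a required sample count from $(\genericstate,\genericaction)$, and two elementary concentration steps converting a number of sample \emph{paths} into a guaranteed number of samples from each relevant $(\genericstate,\genericaction)$; a union bound over $(\genericstate,\genericaction)\in D$ and a minimization over the free parameter $\beta$ then finish the proof.

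\textbf{From value accuracy to transition accuracy (Lemma~\ref{lemma:closemdpcloseestimate}).} I would start from a simulation-style bound, roughly $|\valuefunction{\targetpolicy}_{\mdp}(\initialstate) - \valuefunction{\targetpolicy}_{\hat\mdp}(\initialstate)| \le \tfrac{1}{1-\processcontinueprob}\sum_{\genericstate,\genericaction}\occupancymeasure^{\targetpolicy}(\genericstate,\genericaction)\,\|\probs(\genericstate,\genericaction,\cdot)-\hat\probs(\genericstate,\genericaction,\cdot)\|_1$ (equivalently, enclose $\mdp$ in a robust MDP whose confidence region is an $\ell_1$ ball around $\hat\probs$; this is also what removes the nonlinearity-induced bias in the value estimate). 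I would split the sum at the threshold defining $D$. On $D^c$, bound $\|\probs(\genericstate,\genericaction,\cdot)-\hat\probs(\genericstate,\genericaction,\cdot)\|_1\le 2\processcontinueprob$ and use $|D^c|\le\cardstates\cardactions$ together with $\occupancymeasure^{\targetpolicy}(\genericstate,\genericaction)$ below the threshold; the threshold is calibrated precisely so this tail is at most $\nicefrac{\optimalitygap}{2}$ for every $\beta\in[0,1]$. On $D$, it then suffices to require $\|\probs(\genericstate,\genericaction,\cdot)-\hat\probs(\genericstate,\genericaction,\cdot)\|_1\le\eta_\beta(\genericstate,\genericaction)$ for a tolerance $\eta_\beta(\genericstate,\genericaction)$ of order $\nicefrac{\optimalitygap(1-\processcontinueprob)^{2-\beta}}{(\cardstates\cardactions)^{\beta}\occupancymeasure^{\targetpolicy}(\genericstate,\genericaction)^{\beta}}$: using concavity of $t\mapsto t^{1-\beta}$ and $\sum_{\genericstate,\genericaction}\occupancymeasure^{\targetpolicy}(\genericstate,\genericaction)=\nicefrac{1}{(1-\processcontinueprob)}$, one gets $\sum_{D}\occupancymeasure^{\targetpolicy}(\genericstate,\genericaction)^{1-\beta}\le(\cardstates\cardactions)^{\beta}(1-\processcontinueprob)^{\beta-1}$, so the $D$-part is also at most $\nicefrac{\optimalitygap}{2}$. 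The exponent $\beta$ is the free parameter of this calibration — the source of the $\min_{\beta}$ in the statement.

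\textbf{From transition accuracy to a sample count, with dependent samples (Lemma~\ref{lemma:numberofsamplesfromsaq}).} By the Markov property the successive successors of $(\genericstate,\genericaction)$ are i.i.d.\ $\probs(\genericstate,\genericaction,\cdot)$, but the \emph{total} count $\numberofsampletransitions(\genericstate,\genericaction)$ is random and, in general, correlated with those successors (the bias of Figure~\ref{fig:biasgraph}), so one cannot simply concentrate $\hat\probs$ at the realized index. Instead I would use a uniform-in-$n$ (maximal-type) concentration inequality for running averages of i.i.d.\ categorical variables — a union bound over dyadic blocks of $n$ — to conclude that, with probability $1-\failureprob'$, $\|\hat\probs(\genericstate,\genericaction,\cdot)-\probs(\genericstate,\genericaction,\cdot)\|_1\le\eta_\beta(\genericstate,\genericaction)$ \emph{for all} $n\ge n^{*}(\genericstate,\genericaction)$ with $n^{*}(\genericstate,\genericaction)=\tilde{\mathcal O}(\cardstates/\eta_\beta(\genericstate,\genericaction)^{2})$; hence $\numberofsampletransitions(\genericstate,\genericaction)\ge n^{*}(\genericstate,\genericaction)$ is enough, no matter how $\numberofsampletransitions(\genericstate,\genericaction)$ correlates with the outcomes. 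Substituting $\eta_\beta$ gives $n^{*}(\genericstate,\genericaction)=\tilde{\mathcal O}\big(\cardstates^{1+2\beta}\cardactions^{2\beta}(1-\processcontinueprob)^{-(4-2\beta)}\occupancymeasure^{\targetpolicy}(\genericstate,\genericaction)^{2\beta}\optimalitygap^{-2}\big)$.

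\textbf{From paths to samples, and assembly (Lemmas~\ref{lemma:numberofpathswithsample} and~\ref{lemma:numberofsamplesperstateaction}).} The $\numberofpaths$ paths are i.i.d.; path $k$ visits $(\genericstate,\genericaction)$ at least once with probability $\reachprob^{\behaviorpolicy}(\genericstate,\genericaction)$, independently over $k$, so a Chernoff bound (Lemma~\ref{lemma:numberofpathswithsample}) yields $\Omega(\numberofpaths\reachprob^{\behaviorpolicy}(\genericstate,\genericaction))$ visiting paths once $\numberofpaths=\tilde{\mathcal O}(1/(\processcontinueprob\reachprob^{\behaviorpolicy}(\genericstate,\genericaction)))$; conditioned on a visit, the number of visits within that path dominates a geometric variable with success probability $1-\loopprob^{\behaviorpolicy}(\genericstate,\genericaction)$, i.i.d.\ across those paths, so a Chernoff bound for sums of i.i.d.\ geometrics (Lemma~\ref{lemma:numberofsamplesperstateaction}) gives $\numberofsampletransitions(\genericstate,\genericaction)=\Omega(\numberofpaths\reachprob^{\behaviorpolicy}(\genericstate,\genericaction)/(1-\loopprob^{\behaviorpolicy}(\genericstate,\genericaction)))=\Omega(\numberofpaths\occupancymeasure^{\behaviorpolicy}(\genericstate,\genericaction))$, using $\occupancymeasure^{\behaviorpolicy}(\genericstate,\genericaction)=\reachprob^{\behaviorpolicy}(\genericstate,\genericaction)/(1-\loopprob^{\behaviorpolicy}(\genericstate,\genericaction))$ from Section~\ref{sec:prelim}. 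Demanding $\numberofsampletransitions(\genericstate,\genericaction)\ge n^{*}(\genericstate,\genericaction)$ for every $(\genericstate,\genericaction)\in D$ then reads $\numberofpaths=\tilde{\mathcal O}\big(\cardstates^{1+2\beta}\cardactions^{2\beta}(1-\processcontinueprob)^{-(4-2\beta)}\occupancymeasure^{\targetpolicy}(\genericstate,\genericaction)^{2\beta}\occupancymeasure^{\behaviorpolicy}(\genericstate,\genericaction)^{-1}\optimalitygap^{-2}\big)$, which, combined with the $\numberofpaths=\tilde{\mathcal O}(1/(\processcontinueprob\reachprob^{\behaviorpolicy}(\genericstate,\genericaction)))$ requirement of the two Chernoff bounds, maximized over $(\genericstate,\genericaction)\in D$, minimized over $\beta$, and union-bounded over the $\mathcal O(\cardstates\cardactions)$ failure events (each of probability $\mathcal O(\failureprob/(\cardstates\cardactions))$, giving the hidden $\log(1/\failureprob)$ factors), is exactly the stated bound.

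\textbf{Main obstacle.} The conceptual crux is the dependent-samples step: without independence the sample-mean kernel estimate is genuinely biased (Figure~\ref{fig:biasgraph}), and the remedy — concentrating the running average \emph{uniformly over the sample count} rather than at the realized, dependent count — is what lets the bound hold with no independence assumption. The heaviest bookkeeping is in Lemma~\ref{lemma:closemdpcloseestimate}: choosing the confidence radius, tracking the exact powers of $\cardstates,\cardactions,(1-\processcontinueprob)$, and verifying that the threshold defining $D$ makes the $D^c$ tail at most $\nicefrac{\optimalitygap}{2}$ uniformly in $\beta$.
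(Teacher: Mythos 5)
Your proposal is correct and follows essentially the same route as the paper: the same three-stage reduction (occupancy-weighted accuracy targets via Lemma \ref{lemma:closemdpcloseestimate}, a stopping-time-robust $\ell_1$ concentration uniform over the sample count for Lemma \ref{lemma:numberofsamplesfromsaq}, and the Bernoulli/geometric path-counting of Lemmas \ref{lemma:numberofpathswithsample} and \ref{lemma:numberofsamplesperstateaction}, assembled by union bounds over $(\genericstate,\genericaction)$ and a minimization over $\beta$). The only differences are cosmetic: you derive the occupancy-weighted bound from the standard simulation identity and split the error budget over $D$ and its complement, whereas the paper builds explicit auxiliary MDPs $\overline{\mdp},\underline{\mdp}$ and notes that pairs outside $D$ trivially meet the per-pair tolerance; and you union over dyadic blocks of $n$ instead of all $n$ — both yield the same $\tilde{\mathcal{O}}$ bound.
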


The proof is given in Appendix \ref{appendix} of the supplementary material.

The bound in Theorem \ref{thm:numberofrequiredpaths} holds for every \(\beta \in [0,1]\). This implies that vanilla model-based off-policy evaluation works under both uniform coverage and partial coverage. For \(\beta = 0\), the bound depends on how uniformly \(\behaviorpolicy\) covers the state-action space, i.e., \(\max \nicefrac{1}{ \occupancymeasure^{\behaviorpolicy}(\genericstate, \genericaction)}\). If \(\beta > 0\), the bound depends on the distributional shift between the policies; it is sufficient that \(\behaviorpolicy\) covers the state-action pairs that are frequently visited by \(\targetpolicy\). We note that we do not need to decide on the value of \(\beta\) a priori. We also note that the maximum is over the set \(D\) of state-action pairs for which the target policy has a sufficiently large occupancy measure. This implies that vanilla model-based estimation remains to be accurate for pathological cases where some parts of the MDP are unreachable or reached with a very low probability under both policies. 

The first fraction in the bound given in Theorem \ref{thm:numberofrequiredpaths} shows that as the occupancy measures of the behavior and target policies get close to each other in terms of ratio, then the off-policy estimation gets more accurate. In detail, the sufficient number of sample paths increase when the size of the MDP \(\cardstates\), the maximum occupancy measure \(\nicefrac{1}{1-\processcontinueprob}\), the desired accuracy \(\nicefrac{1}{\optimalitygap}\) or the distributional shift \(\nicefrac{ \occupancymeasure^{\targetpolicy}(\genericstate, \genericaction)}{ \occupancymeasure^{\behaviorpolicy}(\genericstate, \genericaction)}\) between the behavior and target policies increase. The last fraction in the bound is a natural consequence of rejection sampling: \(\tilde{\mathcal{O}}(\nicefrac{1}{\processcontinueprob \reachprob^{\behaviorpolicy}(\genericstate, \genericaction)})\) sample paths are required to ensure that there is at least one path that has a sample from \((\genericstate, \genericaction)\) to \(\states \setminus \lbrace \deadstate \rbrace\).

In the extreme case where the behavior and target policies are the same, the bound has \(\nicefrac{1}{\optimalitygap^2}\) dependence on the desired optimality gap. We note that the expected reward of a random path is subgaussian, and the \(\nicefrac{1}{\optimalitygap^2}\) dependency matches the Chernoff bound.

\paragraph{Proof sketch for Theorem \ref{thm:numberofrequiredpaths}} We follow the steps shown in Figure \ref{fig:flowcharttikz} to prove Theorem \ref{thm:numberofrequiredpaths}. We first decide on the sufficient accuracy level for the transition probability estimates for a given level of accuracy for the value function. Next, we decide on a sufficient number of samples from a given state-action pair to achieve the accuracy level for the transition probability estimates. Finally, we decide on a sufficient number of sample paths to collect the sufficient number of samples from given state-action pairs.

We first decide on the required accuracy for the transition probability estimates from each state to make an accurate estimation for the value function.
Lemma \ref{lemma:closemdpcloseestimate} shows that if the transition probabilities of state-action pairs are accurate proportionally to their occupancy measures, then the estimated value function is accurate. We note that this lemma is similar to simulation lemma~\cite{strehl2008analysis}; however, unlike the simulation lemma, we do not assume a fixed accuracy level for every state-action pair for the estimation of transition probabilities.

\begin{lemma} \label{lemma:closemdpcloseestimate}
For any \(\alpha \geq 0\) and \(0 \leq \beta \leq 1\), if \[\sum_{\altstate \in \states} |\hat{\probs}(\genericstate, \genericaction, \altstate) - \probs(\genericstate, \genericaction, \altstate)| \leq \frac{\alpha}{\occupancymeasure^{\targetpolicy}(\genericstate, \genericaction)^{\beta} }\] for all \(\genericstate \in \states\) and \(\genericaction \in \actions\), then \[|\valuefunction{\targetpolicy}_{\mdp}(\initialstate) - \valuefunction{\targetpolicy}_{\hat{\mdp}}(\initialstate)| \leq \frac{\alpha (\cardstates \cardactions)^{\beta}}{(1-\processcontinueprob)^{2-\beta}} .\]
\end{lemma}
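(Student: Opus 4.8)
The plan is to prove a weighted version of the simulation lemma: unroll the Bellman recursion for the fixed policy $\targetpolicy$, track how a local transition error at a pair $(\genericstate,\genericaction)$ propagates to $\initialstate$ with weight exactly $\occupancymeasure^{\targetpolicy}(\genericstate,\genericaction)$, and then close with Hölder's inequality and concavity of $t\mapsto t^{1-\beta}$. To set up, write $\Delta(\genericstate):=\valuefunction{\targetpolicy}_{\mdp}(\genericstate)-\valuefunction{\targetpolicy}_{\hat{\mdp}}(\genericstate)$ and, for each state-action pair, $e(\genericstate,\genericaction):=\sum_{\altstate\in\states}(\probs(\genericstate,\genericaction,\altstate)-\hat{\probs}(\genericstate,\genericaction,\altstate))\valuefunction{\targetpolicy}_{\hat{\mdp}}(\altstate)$. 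Subtracting the Bellman equations for $\targetpolicy$ in $\mdp$ and $\hat{\mdp}$ and adding and subtracting $\sum_{\altstate}\probs(\genericstate,\genericaction,\altstate)\valuefunction{\targetpolicy}_{\hat{\mdp}}(\altstate)$ gives, for every non-absorbing $\genericstate$,
\[
\Delta(\genericstate)=\sum_{\genericaction\in\actions}\targetpolicy(\genericstate,\genericaction)\Bigl(e(\genericstate,\genericaction)+\sum_{\altstate\in\states}\probs(\genericstate,\genericaction,\altstate)\Delta(\altstate)\Bigr).
\]
Both MDPs have effective discount $\processcontinueprob$ because the model construction enforces $\hat{\probs}(\genericstate,\genericaction,\deadstate)=1-\processcontinueprob$ for every non-absorbing $\genericstate$, so $0\le\valuefunction{\targetpolicy}_{\hat{\mdp}}\le\nicefrac{1}{(1-\processcontinueprob)}$ and $\valuefunction{\targetpolicy}_{\hat{\mdp}}(\deadstate)=0$.

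Next I would unroll this fixed point. Restricted to the non-absorbing states the recursion reads $\Delta=\bar e+P^{\targetpolicy}\Delta$ with $\bar e(\genericstate)=\sum_\genericaction\targetpolicy(\genericstate,\genericaction)e(\genericstate,\genericaction)$ and $P^{\targetpolicy}$ the sub-stochastic state-to-state transition matrix of $\targetpolicy$ in the \emph{true} MDP (spectral radius $\le\processcontinueprob$). The geometric series $\sum_{t\ge0}(P^{\targetpolicy})^{t}$ converges, and recognizing $\sum_{t\ge0}\Pr[\genericstate_t=\genericstate\mid\initialstate,\targetpolicy]$ as the state occupancy measure yields the key identity
\[
\Delta(\initialstate)=\sum_{\genericstate\in\states\setminus\{\deadstate\}}\ \sum_{\genericaction\in\actions}\occupancymeasure^{\targetpolicy}(\genericstate,\genericaction)\,e(\genericstate,\genericaction).
\]
The absorbing state is legitimately excluded because $e(\deadstate,\genericaction)=0$ (both models put all mass on $\deadstate$ and $\valuefunction{\targetpolicy}_{\hat{\mdp}}(\deadstate)=0$), which matters since $\occupancymeasure^{\targetpolicy}(\deadstate,\genericaction)=\infty$; pairs with zero occupancy contribute nothing.

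Finally, by the hypothesis and $\|\valuefunction{\targetpolicy}_{\hat{\mdp}}\|_\infty\le\nicefrac{1}{(1-\processcontinueprob)}$ we have $|e(\genericstate,\genericaction)|\le\|\probs(\genericstate,\genericaction,\cdot)-\hat{\probs}(\genericstate,\genericaction,\cdot)\|_1\,\|\valuefunction{\targetpolicy}_{\hat{\mdp}}\|_\infty\le\frac{\alpha}{(1-\processcontinueprob)\,\occupancymeasure^{\targetpolicy}(\genericstate,\genericaction)^{\beta}}$, hence
\[
|\Delta(\initialstate)|\le\frac{\alpha}{1-\processcontinueprob}\sum_{\genericstate,\genericaction}\occupancymeasure^{\targetpolicy}(\genericstate,\genericaction)^{1-\beta},
\]
the sum running over the at most $\cardstates\cardactions$ positive-occupancy non-absorbing pairs. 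Since $t\mapsto t^{1-\beta}$ is concave and nondecreasing for $\beta\in[0,1]$, Jensen's inequality gives $\sum\occupancymeasure^{\targetpolicy}(\genericstate,\genericaction)^{1-\beta}\le(\cardstates\cardactions)^{\beta}(\sum\occupancymeasure^{\targetpolicy}(\genericstate,\genericaction))^{1-\beta}$, and $\sum_{\genericstate\neq\deadstate,\genericaction}\occupancymeasure^{\targetpolicy}(\genericstate,\genericaction)$ equals the expected path length $\nicefrac{1}{(1-\processcontinueprob)}$. Combining yields $|\Delta(\initialstate)|\le\alpha(\cardstates\cardactions)^{\beta}/(1-\processcontinueprob)^{2-\beta}$, as claimed.

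I expect the main obstacle to be the bookkeeping in the unrolling step: ensuring the convergence and interchange of sums is valid and that the infinite occupancy of $\deadstate$ and the zero-occupancy pairs never cause trouble; and, secondarily, the concavity step, where the number of summands must be counted correctly and monotonicity of $t\mapsto t^{1-\beta}$ used to replace the exact total occupancy by $\nicefrac{1}{(1-\processcontinueprob)}$. The remaining manipulations are routine.
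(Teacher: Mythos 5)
Your proof is correct, but it takes a genuinely different route from the paper's. You use the classical value-difference (simulation-lemma) decomposition: subtract the Bellman equations for \(\targetpolicy\) under \(\probs\) and \(\hat{\probs}\), unroll the resulting fixed point with the \emph{true} kernel, and obtain the exact identity \(\valuefunction{\targetpolicy}_{\mdp}(\initialstate)-\valuefunction{\targetpolicy}_{\hat{\mdp}}(\initialstate)=\sum_{\genericstate,\genericaction}\occupancymeasure^{\targetpolicy}(\genericstate,\genericaction)\,e(\genericstate,\genericaction)\), after which the H\"older bound \(|e(\genericstate,\genericaction)|\le\nicefrac{\alpha}{\left((1-\processcontinueprob)\occupancymeasure^{\targetpolicy}(\genericstate,\genericaction)^{\beta}\right)}\) and the concavity/Jensen step give exactly the stated constant; your choice of adding and subtracting \(\sum_{\altstate}\probs(\genericstate,\genericaction,\altstate)\valuefunction{\targetpolicy}_{\hat{\mdp}}(\altstate)\) is what makes the weights the true-model occupancies appearing in the hypothesis, and your treatment of \(\deadstate\) (zero local error despite infinite occupancy) and of zero-occupancy pairs is sound under the same implicit convention as the paper that \(\deadstate\) remains absorbing with zero value in \(\hat{\mdp}\). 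The paper instead argues by flow redirection: it builds two auxiliary MDPs that send the \(L_1\) discrepancy at each \((\genericstate,\genericaction)\) into a fresh state \(\Delta_{(\genericstate,\genericaction)}\) collecting reward \(1\) (optimistic) or \(0\) (pessimistic), sandwiches both \(\valuefunction{\targetpolicy}_{\mdp}(\initialstate)\) and \(\valuefunction{\targetpolicy}_{\hat{\mdp}}(\initialstate)\) between these two auxiliary values, and bounds their gap by the occupancy of the redirect states, finishing with the same concavity computation. The sandwich argument buys an extra factor \(\nicefrac{1}{2}\) (it redirects only total-variation mass, i.e., half the \(L_1\) distance) and meshes naturally with the robust/pessimistic-MDP constructions invoked later in the paper, while your argument is shorter, produces an exact error identity rather than a two-sided bound, and still yields the bound claimed in the lemma.
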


We set \(\alpha = \nicefrac{\optimalitygap (1-\processcontinueprob)^{(2-\beta)}}{(\cardstates \cardactions)^{\beta}}\) in Lemma \ref{lemma:closemdpcloseestimate} to achieve \(\optimalitygap\) accuracy. 
 Given Lemma \ref{lemma:closemdpcloseestimate}, our goal is to determine the number of sample paths that guarantee a desired estimation accuracy. In order to do so, we first determine the number of sample transitions that is sufficient to estimate the transition probabilities accurately. Lemma \ref{lemma:numberofsamplesfromsaq} provides an upper bound on the number of samples from a state-action pair to estimate transition probabilities within a desired accuracy.

\begin{lemma} \label{lemma:numberofsamplesfromsaq}
 For any \(0 < \failureprob' < 1\),  if \[ \sum_{\altstate \in \states\setminus \lbrace \deadstate \rbrace }\numberofsampletransitions(\genericstate, \genericaction, \altstate) \geq \frac{40 \cardstates}{(\optimalitygap')^2} \log\left(\frac{1}{\optimalitygap'}\right) \log\left( \frac{5}{3 \failureprob} \right)  , \] then \( \sum_{\altstate \in \states}|\hat{\probs}(\genericstate, \genericaction, \altstate) - \probs(\genericstate, \genericaction, \altstate)| \leq \processcontinueprob \optimalitygap' \) with probability at least \(1 - \failureprob'\).
\end{lemma}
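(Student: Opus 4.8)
\emph{Proof plan.} Fix a state--action pair $(\genericstate,\genericaction)$ and abbreviate the hypothesized lower bound as $n_0=\frac{40\cardstates}{(\optimalitygap')^{2}}\log(1/\optimalitygap')\log(5/(3\failureprob'))$. The plan is to first pass from the projected estimate $\hat\probs$ to the raw empirical frequencies $\tilde\probs$, and then prove an $\ell_1$ concentration bound for $\tilde\probs$ that is \emph{uniform over the random number of samples drawn from $(\genericstate,\genericaction)$}. For the first part, note that the true vector $\probs(\genericstate,\genericaction,\cdot)$ lies in the convex set $\{v\in\mathbb{R}^{\states}_{\ge0}:\sum_{\altstate}v(\altstate)=1,\ v(\deadstate)=1-\processcontinueprob\}$ onto which $\tilde\probs(\genericstate,\genericaction,\cdot)$ is $L_2$-projected to obtain $\hat\probs(\genericstate,\genericaction,\cdot)$. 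Since the $\deadstate$-coordinate is pinned by an equality, this projection decouples: it resets $\hat\probs(\genericstate,\genericaction,\deadstate)=1-\processcontinueprob$ and obtains the remaining coordinates from $\tilde\probs(\genericstate,\genericaction,\cdot)$ by redistributing a total $\ell_1$ mass of exactly $|\tilde\probs(\genericstate,\genericaction,\deadstate)-(1-\processcontinueprob)|$ (adding a constant if $\tilde\probs(\genericstate,\genericaction,\deadstate)$ is too small, water-filling if it is too large). Hence $\|\hat\probs(\genericstate,\genericaction,\cdot)-\tilde\probs(\genericstate,\genericaction,\cdot)\|_1=2\,|\tilde\probs(\genericstate,\genericaction,\deadstate)-\probs(\genericstate,\genericaction,\deadstate)|\le 2\|\tilde\probs(\genericstate,\genericaction,\cdot)-\probs(\genericstate,\genericaction,\cdot)\|_1$, so it suffices to establish $\|\tilde\probs(\genericstate,\genericaction,\cdot)-\probs(\genericstate,\genericaction,\cdot)\|_1\le\processcontinueprob\optimalitygap'/3$ with probability at least $1-\failureprob'$ (the factor $3$ is absorbed into the constant).

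For the second part, enumerate the visits to $(\genericstate,\genericaction)$ across all sample paths in time order, let $O_j\in\states$ denote the state entered at the $j$-th such visit, and let $m=\sum_{\altstate\in\states}\numberofsampletransitions(\genericstate,\genericaction,\altstate)$ be the total visit count. Let $\mathcal{F}_{j-1}$ be the $\sigma$-algebra generated by everything observed up to, but not including, the outcome $O_j$ of the $j$-th visit (all of the data, if fewer than $j$ visits occur). Because $\behaviorpolicy$ is stationary and the dynamics are Markov, the event $\{m\ge j\}$ is $\mathcal{F}_{j-1}$-measurable and $\expectation{\mathds{1}[O_j=\altstate]\mid\mathcal{F}_{j-1}}=\probs(\genericstate,\genericaction,\altstate)$ on $\{m\ge j\}$; thus for each $\altstate$ the centred indicators $\{(\mathds{1}[O_j=\altstate]-\probs(\genericstate,\genericaction,\altstate))\mathds{1}[m\ge j]\}_j$ form a bounded martingale difference sequence and $m$ is a stopping time. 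The hypothesis $\sum_{\altstate\ne\deadstate}\numberofsampletransitions(\genericstate,\genericaction,\altstate)\ge n_0$ forces $m\ge n_0$.

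It then remains to bound $\Pr\big[\exists\,k\ge n_0:\ \big\|\frac1k\sum_{j=1}^{k}(\mathds{1}[O_j=\cdot]-\probs(\genericstate,\genericaction,\cdot))\big\|_1>\processcontinueprob\optimalitygap'/3\big]$. I would write the $\ell_1$ norm as a supremum of $\langle v,\cdot\rangle$ over sign patterns $v\in\{-1,1\}^{\states}$, reducing the problem to $2^{\cardstates}$ scalar martingales with increments in $[-2,2]$; peeling the horizon into dyadic blocks $k\in[2^{i}n_0,2^{i+1}n_0)$, applying Doob's maximal inequality together with the Azuma--Hoeffding tail on each block, and summing the resulting geometric series in $i$ yields a bound of the form $C\,2^{\cardstates}\exp\!\big(-c\,n_0(\processcontinueprob\optimalitygap')^{2}\big)$. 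The choice of $n_0$ makes the exponent exceed $\cardstates\log2+\log(C/\failureprob')$, so this probability is at most $\failureprob'$: the $\cardstates/(\optimalitygap')^{2}$ factor is the cost of $\ell_1$-concentrating an empirical distribution over $\cardstates$ outcomes, $\log(5/(3\failureprob'))$ is the high-probability cost, and $\log(1/\optimalitygap')$ absorbs the $O(\log(1/\optimalitygap'))$ dyadic scales that must be treated before the per-block deviation becomes automatically negligible. (Alternatively, one can quote an off-the-shelf time-uniform concentration inequality for the empirical measure of identically distributed Bernoulli indicators with a stopping time and skip the explicit peeling.)

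The crux is precisely this uniformity over the random sample count: the number of samples from $(\genericstate,\genericaction)$ is itself random and correlated with the outcomes --- a transition into a state from which $\genericstate$ is never revisited terminates the supply of samples within a path, which is the mechanism behind the bias discussed earlier --- so a fixed-$n$ Hoeffding or Bretagnolle--Huber--Carol bound does not apply directly; one must exploit the martingale/stopping-time structure while keeping the dependence on $\cardstates$ linear rather than $\cardstates^{2}$ or $\cardstates\log\cardstates$. By contrast, verifying the projection identity of the first part and tracking the constants through to $40$ are routine.
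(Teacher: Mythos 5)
There is a genuine gap, and it sits exactly where the factor \(\processcontinueprob\) in the target accuracy comes from. Your reduction replaces the goal by \(\sum_{\altstate}|\tilde{\probs}(\genericstate,\genericaction,\altstate)-\probs(\genericstate,\genericaction,\altstate)|\le\processcontinueprob\optimalitygap'/3\) for the raw empirical frequencies over \emph{all} successors (including \(\deadstate\)), and you then aim for a tail of the form \(C\,2^{\cardstates}\exp\bigl(-c\,n_0(\processcontinueprob\optimalitygap')^{2}\bigr)\) via Azuma--Hoeffding with increments in \([-2,2]\). But the hypothesized sample size \(n_0=\frac{40\cardstates}{(\optimalitygap')^{2}}\log(\nicefrac{1}{\optimalitygap'})\log(\nicefrac{5}{3\failureprob'})\) contains no factor \(\processcontinueprob^{-2}\), so the exponent \(c\,n_0(\processcontinueprob\optimalitygap')^{2}=40c\,\cardstates\,\processcontinueprob^{2}\log(\nicefrac{1}{\optimalitygap'})\log(\nicefrac{5}{3\failureprob'})\) does \emph{not} exceed \(\cardstates\log 2+\log(C/\failureprob')\) once \(\processcontinueprob\) is small; your final step implicitly assumes \(\processcontinueprob\) is bounded below by a constant, which the lemma does not grant. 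Even the single coordinate \(|\tilde{\probs}(\genericstate,\genericaction,\deadstate)-(1-\processcontinueprob)|\) already breaks the plan: with \(m\) total visits its deviation is of order \(\sqrt{1/m}\) under Hoeffding-type bounds (or \(\sqrt{\processcontinueprob/m}\) with a Bernstein refinement), so pushing it below \(\processcontinueprob\optimalitygap'/3\) needs \(m\gtrsim(\processcontinueprob\optimalitygap')^{-2}\) (resp. \(\processcontinueprob^{-1}(\optimalitygap')^{-2}\)) samples, while the hypothesis only guarantees \(m\ge n_0\sim\cardstates(\optimalitygap')^{-2}\). Your \(L_2\)-projection bookkeeping and the martingale/stopping-time setup are fine; the accuracy target you assign to \(\tilde{\probs}\) is what cannot be met with the stated sample budget.

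The paper avoids paying for \(\processcontinueprob\) in the concentration because the estimator pins \(\hat{\probs}(\genericstate,\genericaction,\deadstate)=1-\processcontinueprob=\probs(\genericstate,\genericaction,\deadstate)\) exactly, so all of the error lives on the non-\(\deadstate\) coordinates, whose total true mass is only \(\processcontinueprob\). Concretely, it writes \(\hat{\probs}(\genericstate,\genericaction,\altstate)=\processcontinueprob\,\numberofsampletransitions(\genericstate,\genericaction,\altstate)/\sum_{\altstate'\neq\deadstate}\numberofsampletransitions(\genericstate,\genericaction,\altstate')\), i.e.\ it analyzes the empirical distribution \emph{conditioned on not entering \(\deadstate\)}, indexed by the number \(m\) of non-\(\deadstate\) transitions --- precisely the quantity the hypothesis lower-bounds. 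At that conditional level the required accuracy is only \(\optimalitygap'\), which \(n_0\) delivers via the fixed-\(m\) \(\ell_1\) inequality of Weissman et al.\ (\(2^{\cardstates}e^{-mc^{2}/2}\)); the random stopping time is handled by a union bound over \(m\) with an \(m\)-dependent radius \(c(m)=\sqrt{\tfrac{2}{m}\bigl(2\log m+\log(\nicefrac{2^{\cardstates}5}{3\failureprob'})\bigr)}\) so the tails sum to at most \(\failureprob'\), and multiplying the conditional error by \(\processcontinueprob\) recovers the stated bound for free. To repair your argument, apply your peeling (or time-uniform) machinery to the renormalized non-\(\deadstate\) empirical distribution with the non-\(\deadstate\) visit count as the sample size, rather than to \(\tilde{\probs}\) at absolute accuracy \(\processcontinueprob\optimalitygap'\); trying instead to salvage the \(\tilde{\probs}\) route would force you to prove, with additional high-probability arguments and variance-aware bounds, that the total visit count is of order \(m/\processcontinueprob\), which the lemma's hypothesis does not provide.
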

 
 To prove Lemma \ref{lemma:numberofsamplesfromsaq}, we use a concentration bound that can handle a random number of samples and possible dependencies between the samples. The bound is a random stopping time generalization of the i.i.d. concentration inequality given in \cite{weissman2003inequalities} for the categorical distributions. Thanks to this bound, we overcome the aforementioned bias problem in estimating transition probabilities during our analysis. By Lemma \ref{lemma:numberofsamplesfromsaq}, to achieve the accuracy given in Lemma \ref{lemma:closemdpcloseestimate}, \(\tilde{\mathcal{O}} \left( \nicefrac{\cardstates^{1+2\beta} \cardactions^{2\beta} \processcontinueprob^{2} {\occupancymeasure^{\targetpolicy}(\genericstate, \genericaction)}^{2\beta}}{\optimalitygap^{2}(1-\processcontinueprob)^{4-2\beta}} \right)\) samples from \((\genericstate, \genericaction)\) to \(\states \setminus \lbrace \deadstate \rbrace\) are sufficient.
 
 In the second part of the proof, we decide on the required number of paths to have enough samples from every state. We first decide on the number of sample paths that has a sample from \((\genericstate, \genericaction)\) to \(\states \setminus \lbrace \deadstate \rbrace\). Lemma \ref{lemma:numberofpathswithsample} provides a lower bound on the number of sample paths that has a sample from a given state-action pair.
\begin{lemma} \label{lemma:numberofpathswithsample}
Let \(\numberofpaths\) be the number of sample paths that are independently collected under \(\behaviorpolicy\). For any \(0 < \failureprob' < 1\), \(\numberofpaths' \geq 0\), and \((\genericstate, \genericaction) \in \states \times \actions\), if \[\numberofpaths \geq \frac{6}{\processcontinueprob \reachprob^{\behaviorpolicy}(\genericstate, \genericaction)} \max \left(  \numberofpaths',  \log(\nicefrac{1}{\failureprob'})\right),\] then the number of sample paths that has a sample from state \(\genericstate\) to \(\states \setminus \lbrace \deadstate \rbrace\) under action \(\genericaction\), is at least \(\numberofpaths'\) with probability at least \(1 - \failureprob'\).
\end{lemma}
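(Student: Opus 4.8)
The plan is to reduce the statement to a multiplicative Chernoff bound for i.i.d.\ Bernoulli random variables, after first establishing a clean per-path lower bound on the probability of obtaining a sample from $(\genericstate,\genericaction)$ whose successor state is not $\deadstate$. For each path $i \in \{1,\dots,\numberofpaths\}$, I would let $X_i$ be the indicator that path $i$ contains a transition from $\genericstate$ to some state in $\states \setminus \{\deadstate\}$ under action $\genericaction$; since the paths are collected independently under the fixed stationary policy $\behaviorpolicy$, the $X_i$ are i.i.d.\ Bernoulli. The first key step is to show $\Pr[X_i = 1] \ge p := \processcontinueprob\,\reachprob^{\behaviorpolicy}(\genericstate,\genericaction)$ (taking $\genericstate \neq \deadstate$; the degenerate case is irrelevant to the application). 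To do this I would consider the first time $T$ at which action $\genericaction$ is taken at state $\genericstate$ on path $i$, a stopping time for the path's natural filtration with $\Pr[T < \infty] = \reachprob^{\behaviorpolicy}(\genericstate,\genericaction)$ by definition of $\reachprob^{\behaviorpolicy}$; conditioned on $\{T = k\}$, the successor state $\genericstate_{k+1}$ is drawn from $\probs(\genericstate,\genericaction,\cdot)$ and therefore lies in $\states \setminus \{\deadstate\}$ with probability exactly $\processcontinueprob$, independently of the history, and on that event $X_i = 1$. Summing over $k$ gives the claimed bound.

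Next, it suffices to prove $\Pr\!\big[\sum_{i=1}^{\numberofpaths} X_i < \numberofpaths'\big] \le \failureprob'$ (the case $\numberofpaths' = 0$ is trivial, and $p > 0$ may be assumed, as otherwise the hypothesis on $\numberofpaths$ is vacuous). Writing $\mu := \Pr[X_i = 1] \ge p$, the hypothesis $\numberofpaths \ge \frac{6}{p}\max(\numberofpaths', \log(1/\failureprob'))$ yields simultaneously $\numberofpaths' \le \tfrac{1}{6}\numberofpaths p \le \tfrac{1}{6}\numberofpaths\mu$ and $\numberofpaths\mu \ge \numberofpaths p \ge 6\log(1/\failureprob')$. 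Hence $\{\sum_i X_i < \numberofpaths'\} \subseteq \{\sum_i X_i \le \tfrac{1}{6}\numberofpaths\mu\}$, and the multiplicative Chernoff lower-tail bound (with $\delta = \tfrac56$) gives
\[
\Pr\!\Big[\textstyle\sum_{i=1}^{\numberofpaths} X_i < \numberofpaths'\Big] \;\le\; \exp\!\Big(-\tfrac{(5/6)^2}{2}\,\numberofpaths\mu\Big) \;\le\; \exp\!\Big(-\tfrac{25}{72}\cdot 6\log(1/\failureprob')\Big) \;=\; (\failureprob')^{150/72} \;\le\; \failureprob',
\]
using $150/72 > 1$ and $0 < \failureprob' < 1$ in the last step.

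The hard part will be the decoupling in the first step: separating the event ``$(\genericstate,\genericaction)$ is reached'' from the event ``the reaching transition avoids $\deadstate$''. Routing this through the stopping time $T$ and the Markov property of the transition dynamics is clean, but one has to verify carefully that $T$ really is a stopping time for the path's natural filtration and that conditioning on $\{T = k\}$ does not bias the next transition (so that it is still exactly $\processcontinueprob$-likely to avoid $\deadstate$); a direct manipulation of $\reachprob^{\behaviorpolicy}$ and $\loopprob^{\behaviorpolicy}$ would be noticeably messier. Everything after that is bookkeeping: the constant $6$ in the hypothesis is chosen precisely so that it both lifts $\expectation{\sum_i X_i}$ above $6\numberofpaths'$ and makes the Chernoff exponent dominate $\log(1/\failureprob')$, so no separate treatment of $\numberofpaths' \in (0,1)$ is required.
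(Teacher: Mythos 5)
Your proposal is correct and takes essentially the same route as the paper's proof: per-path i.i.d.\ Bernoulli indicators with success probability \(\processcontinueprob\,\reachprob^{\behaviorpolicy}(\genericstate,\genericaction)\), followed by a multiplicative Chernoff lower-tail bound, with the constant \(6\) covering both branches of the maximum. The differences are cosmetic: you justify the per-path probability via an explicit stopping-time/Markov-property argument and use the \(\exp(-\delta^{2}\mu/2)\) lower-tail form in one shot, whereas the paper argues the per-path probability informally and splits into two cases with the \(\exp(-\delta^{2}\mu/3)\) form.
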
 By Lemma \ref{lemma:numberofpathswithsample}, to have \(\numberofpaths'\) sample paths that has a sample from \((\genericstate, \genericaction)\) to \(\states \setminus \lbrace \deadstate \rbrace\), \(\tilde{\mathcal{O}} \left(\nicefrac{\numberofpaths'}{\processcontinueprob \reachprob^{\behaviorpolicy}(\genericstate,\genericaction)} \right)\) sample paths are sufficient. 

We note that if a path has a sample from \((\genericstate, \genericaction)\) to \(\states \setminus \lbrace \deadstate \rbrace\), then the number of samples from \((\genericstate, \genericaction)\) to \(\states \setminus \lbrace \deadstate \rbrace\) follows a geometric distribution due to the stationarity of \(\behaviorpolicy\). By a tail bound for the sum of geometric random variables~\cite{janson2018tail}, Lemma \ref{lemma:numberofsamplesperstateaction} provides an upper bound on the number of sample paths that has a sample from a given state-action pair, in order to ensure a desired number of sample transitions from the state-action pair. 
\begin{lemma} \label{lemma:numberofsamplesperstateaction}
Let \(\numberofpaths'\) be the number of sample paths that has a sample from state \(\genericstate\) to \(\states \setminus \lbrace \deadstate \rbrace\) under action \(\genericaction\). For any \(0 < \failureprob' < 1\) and \(k \geq 0\),  if \[ \numberofpaths' \geq \max \left( 8 k (1-\loopprob^{\behaviorpolicy}(\genericstate, \genericaction)), \log(\nicefrac{1}{\failureprob'}) \right) \] then the number of sample transitions from state \(\genericstate\) to \(\states \setminus \lbrace \deadstate \rbrace\) under action \(\genericaction\), is at least \(k\) with probability at least \(1 - \failureprob'\).
\end{lemma}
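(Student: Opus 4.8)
The plan is to reduce the claim to a lower-tail concentration inequality for a sum of i.i.d.\ geometric random variables, one geometric per qualifying sample path. Fix the pair $(\genericstate, \genericaction)$ and write $p = 1 - \loopprob^{\behaviorpolicy}(\genericstate, \genericaction)$ for brevity. The first step is to identify, for a single sample path that contains at least one transition from $\genericstate$ to $\states \setminus \lbrace \deadstate \rbrace$ under $\genericaction$, the conditional law of the number $Y$ of such transitions in that path. By the Markov property together with the stationarity of $\behaviorpolicy$, any use of $(\genericstate, \genericaction)$ is followed by a further use of $(\genericstate, \genericaction)$ with probability exactly $\loopprob^{\behaviorpolicy}(\genericstate, \genericaction)$, independently of the history, and every such subsequent use is reached via a transition from $(\genericstate, \genericaction)$ into $\states \setminus \lbrace \deadstate \rbrace$; this is precisely the memoryless structure behind the formula for $\occupancymeasure^{\behaviorpolicy}(\genericstate, \genericaction)$ derived in Section~\ref{sec:prelim}. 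Consequently, conditioned on the path containing at least one transition from $\genericstate$ to $\states \setminus \lbrace \deadstate \rbrace$ under $\genericaction$, $Y$ is geometrically distributed on $\lbrace 1, 2, \ldots \rbrace$ with success probability $p$, so $\expectation{Y} = 1/p$.

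Since the $\numberofpaths'$ relevant paths are drawn independently, their contributions $Y_{1}, \ldots, Y_{\numberofpaths'}$ are i.i.d.\ copies of $Y$, and the quantity to bound from below is $X = \sum_{j=1}^{\numberofpaths'} Y_{j}$, with $\expectation{X} = \numberofpaths' / p$. We may assume $k \geq 1$ (the case $k = 0$ is trivial) and $\loopprob^{\behaviorpolicy}(\genericstate, \genericaction) < 1$ (otherwise $Y = \infty$ almost surely and $X \geq k$ with certainty). I then invoke the lower-tail bound for sums of geometric variables of \cite{janson2018tail}: for a sum $X$ of $m$ i.i.d.\ $\mathrm{Geom}(p)$ variables with mean $\mu = m/p$ and any $0 < \theta \leq 1$, \[ \Pr\!\left( X \leq \theta \mu \right) \;\leq\; \exp\!\left( -m \left( \theta - 1 - \ln \theta \right) \right). \] Applying this with $m = \numberofpaths'$ and $\theta = k / \expectation{X} = k p / \numberofpaths'$, the hypothesis $\numberofpaths' \geq 8 k (1 - \loopprob^{\behaviorpolicy}(\genericstate, \genericaction)) = 8 k p$ gives $\theta \leq 1/8$, and since $u \mapsto u - 1 - \ln u$ is decreasing on $(0,1)$ we get $\theta - 1 - \ln \theta \geq \tfrac{1}{8} - 1 + \ln 8 > 1$. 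Hence, using $\theta \, \expectation{X} = k$, \[ \Pr\!\left( X < k \right) \;\leq\; \Pr\!\left( X \leq \theta \, \expectation{X} \right) \;\leq\; \exp\!\left( -\numberofpaths' \left( \theta - 1 - \ln \theta \right) \right) \;\leq\; \exp(-\numberofpaths'), \] and the second half of the hypothesis, $\numberofpaths' \geq \log(\nicefrac{1}{\failureprob'})$, yields $\exp(-\numberofpaths') \leq \failureprob'$. Therefore $X \geq k$ with probability at least $1 - \failureprob'$, which is the claim.

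I expect the only genuine subtlety to lie in the first step. One must be careful that the per-path quantity that is geometric is the count of transitions from $(\genericstate, \genericaction)$ into $\states \setminus \lbrace \deadstate \rbrace$, which can differ by one from the raw number of uses of $(\genericstate, \genericaction)$ in the path (the last use may transition to $\deadstate$), and that conditioning on which paths "qualify" — i.e.\ on the event $\lbrace Y_j \geq 1 \rbrace$ for each path, a function of that path alone — leaves the $Y_{j}$'s independent and identically distributed, so that the geometric tail inequality applies verbatim to $X$. Once this reduction is set up, the remainder is a routine invocation of \cite{janson2018tail}, with the two parts of the hypothesis arranged so that one controls the deviation ratio $\theta$ (forcing $\theta \le 1/8$, hence $\theta - 1 - \ln\theta > 1$) and the other controls the exponent against $\log(\nicefrac{1}{\failureprob'})$.
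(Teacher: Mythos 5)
Your proof is correct and follows essentially the same route as the paper: condition on the qualifying paths to get i.i.d.\ geometric counts with success probability \(1-\loopprob^{\behaviorpolicy}(\genericstate,\genericaction)\), then apply the lower-tail bound of \cite{janson2018tail} with the deviation ratio forced below \(\nicefrac{1}{8}\) so the exponent exceeds \(\numberofpaths'\). The only (harmless) difference is stylistic: you use both parts of the hypothesis \(\numberofpaths' \geq \max(8k(1-\loopprob^{\behaviorpolicy}(\genericstate,\genericaction)), \log(\nicefrac{1}{\failureprob'}))\) simultaneously, whereas the paper splits into two cases according to which term attains the maximum.
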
 By Lemma \ref{lemma:numberofsamplesperstateaction}, if \(\numberofpaths'\) paths has a sample from \((\genericstate, \genericaction)\) to \(\states \setminus \lbrace \deadstate \rbrace\), then we have \(\tilde{\mathcal{O}}(\nicefrac{\numberofpaths'}{(1- \loopprob^{\behaviorpolicy}(\genericstate, \genericaction))})\) sample transitions from \((\genericstate, \genericaction)\) to \(\states \setminus \lbrace \deadstate \rbrace\) with high probability. Combining these bounds and the sufficient number of samples per state-action pair, we conclude that if the number \(\numberofpaths\) of paths is greater than or equal to \begin{align*}
    \numberofpaths \geq \tilde{\mathcal{O}} \left(  \underset{\beta \in [0,1]}{\min} \ \underset{\substack{\genericstate \in \states \setminus \lbrace \deadstate \rbrace \\ \genericaction \in \actions}}{\max} \left(  \frac{\cardstates^{1+2\beta}  \cardactions^{2\beta} }{(1-\processcontinueprob)^{4-2\beta}}  \cdot \frac{ \occupancymeasure^{\targetpolicy}(\genericstate, \genericaction)^{2\beta}}{ \occupancymeasure^{\behaviorpolicy}(\genericstate, \genericaction)} \cdot \frac{1 }{\optimalitygap^{2} } , \right. \right. 
    \\
    \left. \left.  \frac{1}{\processcontinueprob \reachprob^{\behaviorpolicy}(\genericstate, \genericaction)}\right)\right), 
\end{align*} then the model-based off-policy estimate is \(\optimalitygap\)-accurate with high probability. Finally, we note that if \(\occupancymeasure^{\targetpolicy}(\genericstate, \genericaction)^{\beta} \leq \nicefrac{\optimalitygap (1-\processcontinueprob)^{1-\beta}}{2(\cardstates \cardactions)^{\beta}}\), then the transition probability estimates for \((\genericstate, \genericaction)\) trivially satisfies the condition given in Lemma \ref{lemma:closemdpcloseestimate}, and \(\beta\) can be arbitrarily chosen between \(0\) and \(1\). Hence, if \begin{align*}
    \numberofpaths \geq \tilde{\mathcal{O}} \left(  \underset{\beta \in [0,1]}{\min} \ \underset{(\genericstate, \genericaction) \in D}{\max} \left(  \frac{\cardstates^{1+2\beta}  \cardactions^{2\beta} }{(1-\processcontinueprob)^{4-2\beta}}  \cdot \frac{ \occupancymeasure^{\targetpolicy}(\genericstate, \genericaction)^{2\beta}}{ \occupancymeasure^{\behaviorpolicy}(\genericstate, \genericaction)} \cdot \frac{1 }{\optimalitygap^{2} } , \right. \right. 
    \\
    \left. \left.  \frac{1}{\processcontinueprob \reachprob^{\behaviorpolicy}(\genericstate, \genericaction)}\right)\right), 
\end{align*} then the model-based off-policy estimate is \(\optimalitygap\)-accurate with high probability where \[D = \left\lbrace (\genericstate, \genericaction) \bigg|  \occupancymeasure^{\targetpolicy}(\genericstate, \genericaction) \leq \frac{(\nicefrac{\optimalitygap}{2})^{\nicefrac{1}{\beta}} (1-\processcontinueprob)^{\nicefrac{(1-\beta)}{\beta}}}{\cardstates \cardactions} \right\rbrace.\]

\paragraph{Comparison with importance sampling} As a comparison for the bound given in Theorem \ref{thm:numberofrequiredpaths}, we derive a bound on the sufficient number of sample paths that need to be collected for the vanilla off-policy estimation with importance sampling in the infinite-horizon discounted-reward setting. 

The importance sampling estimate is \[\hat{V}_{\mdp}^{\targetpolicy}(\initialstate) = \frac{1}{\numberofpaths}\sum_{i=1}^{\numberofpaths} \frac{\Pr(\apath^{i}= \genericstate^{i}_{0} \genericaction^{i}_{0} \ldots | \targetpolicy) }{ \Pr(\apath^{i} = \genericstate^{i}_{0} \genericaction^{i}_{0} \ldots| \behaviorpolicy)} \sum_{t=0}^{\infty} \expectation{ \rewards(\genericstate^{i}_{t}, \genericaction^{i}_{t}) } \] where \(\apath^{1}, \ldots, \apath^{\numberofpaths}\) are sample paths collected under \(\behaviorpolicy\).
Let \(\pathdist{\target}\) and \(\pathdist{\behavior}\) denote the distribution of paths under the target and behavior policies, respectively. To accurately estimate the expected value of a given function with respect to the probability measure \(\pathdist{\target}\) with high probability using the sample paths that are collected under policy \(\pathdist{\behavior}\), approximately \[\exp\left(\mathbb{E}_{\apath\sim\pathdist{\targetpolicy}}\left[l(\apath)\right] + \mathcal{O}\left(\mathrm{Std}_{\apath\sim\pathdist{\targetpolicy}}\left(l(\apath)\right)\right)\right)\] samples are sufficient for importance sampling where \(l(\apath) = \log\left(\nicefrac{\Pr(\apath|\targetpolicy)}{\Pr(\apath|\behaviorpolicy)}\right)\)~\cite{chatterjee2018sample}. 
Considering that \(\targetpolicy\) and \(\behaviorpolicy\) are stationary, we have \[\mathbb{E}_{\apath\sim\pathdist{\targetpolicy}}\left[l(\apath)\right] \leq \frac{1}{1-\processcontinueprob} \log\left(\max_{\substack{\genericstate \in \states \setminus \deadstate \\ \genericaction \in \actions}} \frac{\targetpolicy(\genericstate, \genericaction)}{\behaviorpolicy(\genericstate, \genericaction)} \right)\] and \[\mathrm{Std}_{\apath\sim\pathdist{\targetpolicy}}(l(\apath)) \leq \frac{\sqrt{2}}{1-\processcontinueprob} \log\left(\max_{\substack{\genericstate \in \states \setminus \deadstate \\ \genericaction \in \actions}} \frac{\targetpolicy(\genericstate, \genericaction)}{\behaviorpolicy(\genericstate, \genericaction)}\right).  \] The details of this derivation are given in Appendix \ref{appendiximportance} of the supplementary material. As a result, approximately
\begin{align*}
    &\exp\left(\mathbb{E}_{\apath\sim\pathdist{\targetpolicy}}\left[l(\apath)\right] + \mathcal{O}\left(\mathrm{Std}_{\apath\sim\pathdist{\targetpolicy}}\left(l(\apath)\right)\right)\right)
    \\
    &\leq \left(\max_{\substack{\genericstate \in \states \setminus \deadstate \\ \genericaction \in \actions}} \frac{\targetpolicy(\genericstate, \genericaction)}{\behaviorpolicy(\genericstate, \genericaction)}\right)^{\mathcal{O}\left(\frac{1}{1-\processcontinueprob}\right)}.
\end{align*} sample paths are sufficient for the vanilla off-policy estimation with importance sampling in the infinite-horizon discounted-reward case.

A form of the maximum distributional shift between the policies,  \(\max_{(\genericstate, \genericaction) \in D}  \left(\nicefrac{ \occupancymeasure^{\targetpolicy}(\genericstate, \genericaction)^{2\beta}}{ \occupancymeasure^{\behaviorpolicy}(\genericstate, \genericaction)}\right)\) for the model-based method and \(\max_{\substack{\genericstate \in \states \setminus \deadstate \\ \genericaction \in \actions}}  \left(\nicefrac{ \targetpolicy(\genericstate, \genericaction)}{ \behaviorpolicy(\genericstate, \genericaction)}\right)\) for the importance sampling method, appears in the upper bounds for both model-based off-policy estimation and off-policy estimation via importance sampling: As the inherit distance between the target and behavior policies increase, the problem of off-policy estimation becomes more challenging. 

We also note that the upper bound for the importance sampling has an exponential dependency on the expected time horizon, i.e.,  \(\nicefrac{1}{1-\processcontinueprob}\). Similar to the finite-horizon case, the variance of the estimates can grow exponentially with the expected time horizon in the infinite-horizon discounted-reward case.

\paragraph{Off-policy optimization} The estimated model can be used to maximize a known reward function. By letting \(\beta = 0\) in Theorem \ref{thm:numberofrequiredpaths}, we have the following result, which provides a bound on the number of paths that need to be collected for off-policy optimization.

\begin{corollary} \label{corollary:numberofrequiredpathsoptimization}
Let \(\numberofpaths\) be the number of sample paths that are independently collected under \(\behaviorpolicy\). Also, let \(\policy'\) denote optimal policy for the estimated model \(\hat{\mdp}\), and \(\policy^{*}\) denote optimal policy for the true model \(\mdp\). 
If the number \(\numberofpaths\) of paths is greater than or equal to \[ \tilde{\mathcal{O}} \left( \underset{\substack{\genericstate \in \states \setminus \lbrace \deadstate \rbrace \\ \genericaction \in \actions}}{\max}  \left(\frac{\cardstates \processcontinueprob  }{(1-\processcontinueprob)^4}  \cdot \frac{ 1}{ \occupancymeasure^{\behaviorpolicy}(\genericstate, \genericaction)} \cdot \frac{1 }{\optimalitygap^{2} } , \frac{1}{\processcontinueprob \reachprob^{\behaviorpolicy}(\genericstate, \genericaction)}\right)  \right),\] then with probability at least \(1- \failureprob\), we have \[|\valuefunction{\policy'}_{\mdp}(\initialstate) - \valuefunction{\policy^{*}}_{\mdp}(\initialstate)| \leq \optimalitygap,\]  where the dependency on \(\nicefrac{1}{\failureprob}\) is logarithmic.
\end{corollary}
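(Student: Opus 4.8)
}
The plan is to reduce the optimization guarantee to the uniform‑coverage instance ($\beta=0$) of Theorem \ref{thm:numberofrequiredpaths}, combined with the standard decomposition that turns a uniformly accurate model into a near‑optimality guarantee for the certainty‑equivalent policy. First I would observe that taking $\beta=0$ in Theorem \ref{thm:numberofrequiredpaths} (and running the argument with target accuracy $\optimalitygap/2$, which only affects the constants hidden by $\tilde{\mathcal{O}}$) makes $\occupancymeasure^{\targetpolicy}(\genericstate,\genericaction)^{2\beta}$ collapse to $1$, turns $\cardstates^{1+2\beta}\cardactions^{2\beta}$ into $\cardstates$ and $(1-\processcontinueprob)^{4-2\beta}$ into $(1-\processcontinueprob)^{4}$, and makes the set $D$ the whole reachable state‑action space $(\states\setminus\lbrace\deadstate\rbrace)\times\actions$; the resulting sample‑path count is exactly the bound written in the corollary, so the hypothesis on $\numberofpaths$ is the $\beta=0$ hypothesis of Theorem \ref{thm:numberofrequiredpaths}.

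Second — and this is the crux — I would emphasize that the high‑probability event on which the $\beta=0$ analysis succeeds is an event about the \emph{model}, not about any particular policy. Concretely, via Lemma \ref{lemma:numberofsamplesfromsaq} the stated number of paths forces $\sum_{\altstate\in\states}|\hat{\probs}(\genericstate,\genericaction,\altstate)-\probs(\genericstate,\genericaction,\altstate)|$ to be small \emph{simultaneously} for every $(\genericstate,\genericaction)$, with probability at least $1-\failureprob$ (the $\failureprob$ budget is already split over state‑action pairs inside the proof of Theorem \ref{thm:numberofrequiredpaths}), and this bound never references the policy being evaluated. Feeding this uniform transition accuracy into Lemma \ref{lemma:closemdpcloseestimate} with $\beta=0$ then gives, on the very same event, $|\valuefunction{\policy}_{\mdp}(\initialstate)-\valuefunction{\policy}_{\hat{\mdp}}(\initialstate)|\le\optimalitygap/2$ for \emph{all} stationary policies $\policy$ at once; in particular it holds for the true optimal policy $\policy^{*}$ and for $\policy'$, an optimal policy of $\hat{\mdp}$, which exists because the reward is known and $\hat{\mdp}$ is a finite discounted MDP.

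Third I would chain the decomposition
\[
\valuefunction{\policy^{*}}_{\mdp}(\initialstate)-\valuefunction{\policy'}_{\mdp}(\initialstate)
=\bigl(\valuefunction{\policy^{*}}_{\mdp}(\initialstate)-\valuefunction{\policy^{*}}_{\hat{\mdp}}(\initialstate)\bigr)
+\bigl(\valuefunction{\policy^{*}}_{\hat{\mdp}}(\initialstate)-\valuefunction{\policy'}_{\hat{\mdp}}(\initialstate)\bigr)
+\bigl(\valuefunction{\policy'}_{\hat{\mdp}}(\initialstate)-\valuefunction{\policy'}_{\mdp}(\initialstate)\bigr),
\]
bounding the first and third parentheses by $\optimalitygap/2$ with the uniform value estimate above, and the middle parenthesis by $0$ because $\policy'$ maximizes value in $\hat{\mdp}$; since $\policy^{*}$ is optimal in $\mdp$ the left‑hand side is nonnegative, hence $|\valuefunction{\policy'}_{\mdp}(\initialstate)-\valuefunction{\policy^{*}}_{\mdp}(\initialstate)|\le\optimalitygap$. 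The logarithmic dependence on $\nicefrac{1}{\failureprob}$ is inherited verbatim from Theorem \ref{thm:numberofrequiredpaths}.

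I expect the only real subtlety — rather than a genuine obstacle — to be making the ``uniform over all policies'' step airtight: one must be explicit that no further union bound over the (uncountable) policy class is needed, because Lemma \ref{lemma:closemdpcloseestimate} deterministically converts a single good model into a value guarantee for every policy, and the randomness was already spent on estimating the transition probabilities at every state‑action pair. A secondary bookkeeping point is verifying that the degenerate exponents at $\beta=0$ and the definition of $D$ specialize to precisely the expression appearing in the corollary.
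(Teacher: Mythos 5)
Your proposal is correct and takes essentially the same route as the paper's proof: instantiate Theorem \ref{thm:numberofrequiredpaths} at \(\beta=0\) with accuracy \(\nicefrac{\optimalitygap}{2}\), obtain \(|\valuefunction{\policy}_{\mdp}(\initialstate)-\valuefunction{\policy}_{\hat{\mdp}}(\initialstate)|\leq\nicefrac{\optimalitygap}{2}\) for both \(\policy^{*}\) and \(\policy'\) on one high-probability event, and conclude via the optimality of \(\policy'\) on \(\hat{\mdp}\). Your explicit observation that the \(\beta=0\) event concerns only the model, so the guarantee covers the data-dependent \(\policy'\) with no union bound over policies, is a useful clarification that the paper leaves implicit.
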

The proof is given in Appendix \ref{appendix} of the supplementary material.

Corollary \ref{corollary:numberofrequiredpathsoptimization} shows the accuracy of vanilla model-based optimization under uniform coverage, i.e., dependency on \(\nicefrac{ 1}{ \occupancymeasure^{\behaviorpolicy}(\genericstate, \genericaction)}\). Related works such as \cite{yan2022efficacy,rashidinejad2021bridging} provide sample complexity bounds that require only partial coverage, i.e., \(\nicefrac{ \occupancymeasure^{\policy^{*}}(\genericstate, \genericaction)}{ \occupancymeasure^{\behaviorpolicy}(\genericstate, \genericaction)}\). The difference is because the vanilla model-based estimation is performed using the sample-mean estimates whereas the aforementioned works use a pessimistic MDP (as in the proof of Theorem \ref{thm:numberofrequiredpaths}). We can recover the same sample complexity bound given in Theorem \ref{thm:numberofrequiredpaths} for off-policy optimization with a pessimistic MDP built using the confidence bounds given in Lemma \ref{lemma:numberofsamplesfromsaq}.

\paragraph{Estimation of the Accuracy Level}
The learner is agnostic to the value of the bound given in Theorem \ref{thm:numberofrequiredpaths}. Computing the lower bound requires knowing the model fully, which inherently contradicts the motivation of off-policy learning. Consequently, the learner cannot directly know the accuracy level for a given confidence level. 

To estimate the level of accuracy, we can use the occupancy measure values that are computed using the estimated model. This approach gives an asymptotically consistent estimator of the accuracy level. Since the estimated model becomes more accurate with the increasing number of samples, the estimated occupancy measures become more accurate and, consequently, the estimate for the level of accuracy becomes more accurate.

In order to estimate the accuracy level with non-asymptotical guarantees, we need to compute the occupancy measures of the behavior and target policies. Computing the occupancy measures of the behavior policy is relatively easy and does not require the model construction: The sample paths are direct samples for the occupancy measures. Since the number of samples from a state-action pair in a path is a subexponential random variable, we can use Bernstein's inequality to compute a lower bound on the occupancy measures of the behavior policy. On the other hand, computing the occupancy measures of the target policy is challenging since the distribution of interest and the source of samples are not the same. For this computation, we can use robust MDPs. Lemma \ref{lemma:numberofsamplesfromsaq} gives a possible set of transition probabilities for a given confidence level. Using this lemma, we can build a robust MDP that contains the true transition probabilities with high probability. Then, we compute the largest possible occupancy measures for the target policy. Finally, given the robust estimates for the occupancy measures and the number of sample paths, we can compute a provably correct accuracy level using the bound given in Theorem \ref{thm:numberofrequiredpaths}.

\section{Beyond Sample Mean Estimators} \label{sec:dfestimator}
Sample mean estimators are preferred due to their asymptotic consistency and low variance. But, can we do better? The answer is positive given prior knowledge of the transition dynamics of the system. Reinforcement learning is often concerned with almost deterministic systems that have noisy dynamics, but the amount of noise is limited. For these systems, there exists an asymptotically consistent estimator that has a lower error than the sample mean estimator in the regime of low number of samples.

In our context, an MDP is almost deterministic if for all \(\genericstate \in \states \setminus \lbrace \deadstate \rbrace\) and \(\genericaction \in \actions\) there exists a \(\altstate \in \states \setminus \lbrace \deadstate \rbrace\) such that \(\probs(\genericstate, \genericaction, \altstate) \geq \processcontinueprob(1-\epsilon)\) with \(\epsilon \approx 0\).

Let \(X\) be a categorical random variable with support \(\lbrace 1, \ldots, K\rbrace\) and probability distribution \(\left[ p_{1}, \ldots, p_{K}\right]\). The sample mean estimator of \(p_{i}\) is \(\hat{p}_{i}^{SM} = \nicefrac{N_{i}}{N}\) where \(N\) is the number of i.i.d. samples and \(N_{i}\) is the number of samples with value \(i\). Let \(d = \arg\max_{i} N_{i}\). We define the deterministic-favored estimator as \[\hat{p}_{i}^{DF} = \frac{N_{i}+\sqrt{N}\mathds{1}_{d}(i)}{N + \sqrt{N}}\] where \(\mathds{1}_{d}(i)\) is \(1\) if \(i=d\) and 0 otherwise. We note that the deterministic-favored estimator does the opposite of the minimax estimator: while the minimax estimator favors a uniform posterior~\cite{wasserman2006all}, the deterministic-favored estimator boosts the estimated probability of the most frequent observation. 

The deterministic-favored estimator is asymptotically consistent. Furthermore, despite being biased, it has a lower mean squared error (MSE) than the sample mean estimator when \(X\) is almost deterministic.
\begin{theorem}\label{thm:mseofdfestimator}
The MSE of the deterministic-favored estimator \( \hat{p}_{i}^{DF} \) satisfies
\[\expectation{\left(\hat{p}_{i}^{DF} - p_{i}\right)^2} \leq \frac{N(1-p_{i})}{(N+\sqrt{N})^2} + \exp\left(- \frac{(2p_{i}-1)^2N}{12(1-p_{i})} \right)\] if \(p_{i} > \nicefrac{1}{2}.\)
\end{theorem}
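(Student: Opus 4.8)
The plan is to split the mean squared error according to whether the empirically most frequent value equals $i$, i.e., over the events $\{d=i\}$ and $\{d\neq i\}$, and to bound the two pieces separately. The structural fact that drives the argument is $\{N_i>\nicefrac N2\}\subseteq\{d=i\}$: if strictly more than half of the samples take value $i$, every other count is strictly smaller, hence $d=i$; equivalently $\{d\neq i\}\subseteq\{N_i\le\nicefrac N2\}$, which, since $p_i>\nicefrac12$, is an exponentially unlikely large-deviation event.

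On $\{d=i\}$ we have $\mathds{1}_d(i)=1$, so $\hat p_i^{DF}-p_i=\bigl((N_i-Np_i)+\sqrt N(1-p_i)\bigr)/(N+\sqrt N)$. Because the squared error is nonnegative, I would upper bound $\mathbb E\bigl[(\hat p_i^{DF}-p_i)^2\mathds{1}_{\{d=i\}}\bigr]$ by dropping the indicator and computing the unconditional second moment of that expression: since $N_i-Np_i$ is mean zero with variance $Np_i(1-p_i)$, it equals $\bigl(Np_i(1-p_i)+N(1-p_i)^2\bigr)/(N+\sqrt N)^2=N(1-p_i)/(N+\sqrt N)^2$, which is exactly the first term of the claim. (This is a bias--variance split: appending $\sqrt N$ to numerator and denominator injects a bias $\sqrt N(1-p_i)/(N+\sqrt N)$, while $Np_i(1-p_i)/(N+\sqrt N)^2$ is the variance of $N_i/(N+\sqrt N)$.) On the complementary event, both $\hat p_i^{DF}$ and $p_i$ lie in $[0,1]$, so $(\hat p_i^{DF}-p_i)^2\le 1$ and $\mathbb E\bigl[(\hat p_i^{DF}-p_i)^2\mathds{1}_{\{d\neq i\}}\bigr]\le\Pr(d\neq i)\le\Pr\bigl(N_i\le\nicefrac N2\bigr)$.

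It then remains to show $\Pr\bigl(N_i\le\nicefrac N2\bigr)=\Pr\bigl(N_i\le Np_i-N(p_i-\nicefrac12)\bigr)\le\exp\bigl(-(2p_i-1)^2N/(12(1-p_i))\bigr)$ for $N_i\sim\mathrm{Binomial}(N,p_i)$. The natural tool is Bernstein's inequality applied with the \emph{true} variance $Np_i(1-p_i)$ — not the crude range bound $N/4$, since the factor $(1-p_i)$ in the exponent only surfaces once the small variance is exploited — and deviation $N(p_i-\nicefrac12)=N(2p_i-1)/2$; this yields an exponent of the right shape $-(2p_i-1)^2N/(c(1-p_i))$, and the bookkeeping (controlling the $t/3$ term in Bernstein's denominator against $(1-p_i)$ and tracking the numerical constants) has to be arranged so that $c=12$ is admissible. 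This constant-chasing is the one delicate point; I expect the regime $p_i$ close to $1$ — where the deviation is large relative to the standard deviation — to demand the most care about which form of the concentration inequality is invoked. Adding the two bounds then gives the stated inequality, and the asymptotic consistency noted just before the theorem is immediate since $\hat p_i^{DF}-\hat p_i^{SM}=(\mathds{1}_d(i)-\hat p_i^{SM})/(\sqrt N+1)\to 0$ a.s.\ while $\hat p_i^{SM}\to p_i$ a.s.
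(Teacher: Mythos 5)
Your decomposition and your treatment of the two pieces coincide with the paper's proof. The paper formalizes your ``drop the indicator'' step by introducing the auxiliary estimator $\hat{p}_i^{iF}=(N_i+\sqrt{N})/(N+\sqrt{N})$ that always awards the $\sqrt{N}$ bonus to coordinate $i$; its MSE is the bias--variance sum $\bigl(Np_i(1-p_i)+N(1-p_i)^2\bigr)/(N+\sqrt{N})^2=N(1-p_i)/(N+\sqrt{N})^2$, and since $\hat{p}_i^{iF}=\hat{p}_i^{DF}$ on $\{d=i\}$ this dominates the $\{d=i\}$ contribution --- exactly your computation. The $\{d\neq i\}$ contribution is likewise bounded in the paper by $1\cdot\Pr(d\neq i)\le\Pr(N_i\le\nicefrac{N}{2})$ via the same inclusion you state.

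The gap is the concentration step you leave as ``constant-chasing,'' and it cannot be closed the way you propose. Bernstein with variance $Np_i(1-p_i)$ and deviation $t=N(p_i-\nicefrac{1}{2})$ gives exponent $N(p_i-\nicefrac{1}{2})^2/\bigl(2p_i(1-p_i)+\tfrac{2}{3}(p_i-\nicefrac{1}{2})\bigr)$, which stays $\Theta(N)$ as $p_i\to1$ because of the $t/3$ term, whereas the target exponent $(2p_i-1)^2N/(12(1-p_i))$ blows up. In fact the asserted tail bound is simply false for $p_i$ near $1$: $\Pr(N_i\le\nicefrac{N}{2})\ge\Pr(N_i=0)=(1-p_i)^N$, which for fixed $N$ exceeds $\exp\bigl(-(2p_i-1)^2N/(12(1-p_i))\bigr)$ once $1-p_i$ is small, the true large-deviation rate $D(\nicefrac{1}{2}\,\|\,p_i)\approx\tfrac{1}{2}\log\tfrac{1}{4(1-p_i)}$ being only logarithmic in $\tfrac{1}{1-p_i}$. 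The stated exponent is what the multiplicative Chernoff bound $\Pr\bigl(\mathrm{Bin}(N,1-p_i)\ge(1+\delta)N(1-p_i)\bigr)\le\exp\bigl(-N(1-p_i)\delta^2/3\bigr)$ produces with $\delta=\tfrac{2p_i-1}{2(1-p_i)}$, but that form requires $\delta\le1$, i.e.\ $p_i\le\nicefrac{3}{4}$; the paper's own proof invokes ``the Chernoff-Hoeffding bound'' at precisely this point and is subject to the same restriction, so your suspicion about the $p_i$-close-to-$1$ regime is well founded and applies to the paper as well. As written, your argument (like the paper's) establishes the theorem only for $\nicefrac{1}{2}<p_i\le\nicefrac{3}{4}$; for larger $p_i$ the displayed second term is unattainable by any tail bound on $\Pr(N_i\le\nicefrac{N}{2})$, and one would have to substitute a valid bound such as $\exp\bigl(-N(2p_i-1)/6\bigr)$ or $\exp\bigl(-N\,D(\nicefrac{1}{2}\,\|\,p_i)\bigr)$ and adjust the statement accordingly.
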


The proof of Theorem \ref{thm:mseofdfestimator} is given in Appendix \ref{appendixdfestimator} of the supplementary material.

We note that the second term decays exponentially fast as \(N\) or \(p_{i}\) increases.
On the other hand, the MSE of the sample mean estimator is \(\nicefrac{Np_{i}(1-p_{i})}{N^{2}}\). The deterministic favored estimator performs better than the sample mean estimator if \(p_{i} \geq 1 - \epsilon \) where \(\epsilon \approx \nicefrac{N^2}{(N+\sqrt{N})^2}\). For instance, \(1-\nicefrac{N^2}{(N+\sqrt{N})^2} = 0.826\) for \(N = 100\). Symmetrically, for small \(p_{i}\), the deterministic-favored estimator performs better than the sample mean estimator if there exists \(j\) such that \(p_{j} \geq 1 - \epsilon \) where \(\epsilon \approx \nicefrac{N^2}{(N+\sqrt{N})^2}\). Overall, the deterministic-favored estimator outperforms the sample mean estimator for almost-deterministic random variables in the regime of low number of samples.

\section{Related Work} \label{sec:relatedwork}
Off-policy evaluation literature mainly focuses on finding good estimators with a low bias and variance. The previous methods to solve this problem include importance sampling~\cite{liu2018breaking,xie2019towards}, variants of dynamic programming~\cite{hao2021bootstrapping,munos2008finite}, and model-based approaches~\cite{rashidinejad2021bridging,yan2022efficacy,mannor2004bias}. Model-based approaches are also used for offline policy optimization~\cite{yu2020mopo,yu2021combo}. 

\citet{yin2021near} studied the sample complexity of vanilla model-based off-policy evaluation and optimization in the finite-horizon setting under uniform coverage. Different from the infinite horizon setting, the sample transitions are i.i.d. in the finite-horizon setting.  For off-policy evaluation, \citet{yin2021near} showed a path sample complexity of \(\tilde{\mathcal{O}}\left(\nicefrac{H^{2}}{\min_{\substack{\genericstate \in \states \\  \genericaction \in \actions}} \occupancymeasure^{\behaviorpolicy} (\genericstate, \genericaction) \optimalitygap^{2} } + \nicefrac{H^{2}\sqrt{\cardstates \cardactions}}{\optimalitygap}\right)\) using martingale concentration inequalities where \(H\) is the time horizon. For the infinite-horizon setting, the bound that we give in Theorem \ref{thm:numberofrequiredpaths}, does not have any non-logarithmic dependencies in the number of actions and has a higher order dependency in the time horizon when \(\beta = 0\), i.e., uniform coverage. Similarly, for off-policy optimization \citet{yin2021near} showed a path sample complexity of \(\tilde{\mathcal{O}}\left(\nicefrac{H^{4}\cardstates}{\min_{\substack{\genericstate \in \states \\  \genericaction \in \actions}} \occupancymeasure^{\behaviorpolicy} (\genericstate, \genericaction) \optimalitygap^{2} } \right)\). The bound that we give in Corollary \ref{corollary:numberofrequiredpathsoptimization} matches the finite-horizon bound of \cite{yin2021near}. We also remark that our bound matches the sample complexity bound of vanilla asynchronous Q-learning~\cite{li2021q}.

The sample complexity of model-based off-policy optimization is studied extensively using pessimistic models~\cite{uehara2021pessimistic,ross2012agnostic,li2022settling,rashidinejad2021bridging}. Recently, \citet{li2022settling} showed a lower bound of \(\tilde{\mathcal{O}}\left( \nicefrac{\cardstates \max_{\substack{\genericstate \in \states \setminus \deadstate \\ \genericaction \in \actions}} \frac{\occupancymeasure^{\targetpolicy}(\genericstate, \genericaction)}{\occupancymeasure^{\behaviorpolicy}(\genericstate, \genericaction)} }{ (1-\processcontinueprob)^{3}\optimalitygap^{2}} \right)\) (independent) sample transitions and provided an algorithm with a matching sample complexity. The algorithm given in \cite{li2022settling} uses a pessimistic model, whereas we analyze the sample complexity of vanilla model-based off-policy optimization without pessimistic penalties. 

Different from the majority of model-based off-policy evaluation and optimization works, we consider that the samples are coming from time series data, i.e., paths, that obey the dynamics of the MDP. In the model-free setting, \citet{uehara2020minimax,yan2022efficacy,li2021q} consider that the samples come from an underlying Markov chain and showed that the independence assumption can be removed by assuming a mixing property for the underlying Markov chain. Unlike these works, we consider that the samples are (unbounded length) episodes of the MDP and do not require the underlying Markov chain to be ergodic. Consequently, we do not have the burn-in sampling costs due to the mixing time.

Existing literature on finite-horizon off-policy importance sampling~\cite{liu2018breaking,xie2019towards} shows that the variance of the importance sampling estimates grows exponentially with the horizon length. As a similar result, we show a sample complexity upper bound for the discounted infinite-horizon setting that has an exponential dependency in the expected time horizon.

To the best of our knowledge, previous model-based offline RL works use sample-mean estimators to estimate the transition probabilities. For the almost-deterministic environments, we propose an estimator that is motivated by the minimax estimator of the categorical random variables~\cite{wasserman2006all}.

\section{Conclusion}
We analyzed the sample complexity of vanilla model-based off-policy reinforcement learning with dependent samples. Despite its simple nature, the sample complexities of the vanilla model-based method are comparable to those of optimal algorithms. While the sample mean estimator becomes biased with dependent samples, our results show the order of the sample complexity remains the same compared to the case with independent samples. We also give an estimator that outperforms the sample mean estimator for almost-deterministic random variables. 

\section*{Acknowledgments}
This work was supported in part by ARO W911NF2110009 and NSF 1652113.

\bibliography{ref}

\begin{thebibliography}{30}
\providecommand{\natexlab}[1]{#1}

\bibitem[{Chatterjee and Diaconis(2018)}]{chatterjee2018sample}
Chatterjee, S.; and Diaconis, P. 2018.
\newblock The sample size required in importance sampling.
\newblock \emph{The Annals of Applied Probability}, 28(2): 1099--1135.

\bibitem[{Ebert et~al.(2018)Ebert, Finn, Dasari, Xie, Lee, and
  Levine}]{ebert2018visual}
Ebert, F.; Finn, C.; Dasari, S.; Xie, A.; Lee, A.~X.; and Levine, S. 2018.
\newblock Visual Foresight: Model-Based Deep Reinforcement Learning for
  Vision-Based Robotic Control.
\newblock \emph{CoRR}, abs/1812.00568.

\bibitem[{Gilotte et~al.(2018)Gilotte, Calauz{\`e}nes, Nedelec, Abraham, and
  Doll{\'e}}]{gilotte2018offline}
Gilotte, A.; Calauz{\`e}nes, C.; Nedelec, T.; Abraham, A.; and Doll{\'e}, S.
  2018.
\newblock Offline a/b testing for recommender systems.
\newblock In \emph{Proceedings of the Eleventh ACM International Conference on
  Web Search and Data Mining}, 198--206.

\bibitem[{Hao et~al.(2021)Hao, Ji, Duan, Lu, Szepesvari, and
  Wang}]{hao2021bootstrapping}
Hao, B.; Ji, X.; Duan, Y.; Lu, H.; Szepesvari, C.; and Wang, M. 2021.
\newblock Bootstrapping Fitted Q-Evaluation for Off-Policy Inference.
\newblock In \emph{Proceedings of the 37th International Conference on Machine
  Learning}, 4074--4084.

\bibitem[{Henderson, Lemon, and Georgila(2008)}]{henderson2008hybrid}
Henderson, J.; Lemon, O.; and Georgila, K. 2008.
\newblock Hybrid reinforcement/supervised learning of dialogue policies from
  fixed data sets.
\newblock \emph{Computational Linguistics}, 34(4): 487--511.

\bibitem[{Janson(2018)}]{janson2018tail}
Janson, S. 2018.
\newblock Tail bounds for sums of geometric and exponential variables.
\newblock \emph{Statistics \& Probability Letters}, 135: 1--6.

\bibitem[{Levine et~al.(2018)Levine, Pastor, Krizhevsky, Ibarz, and
  Quillen}]{levine2018learning}
Levine, S.; Pastor, P.; Krizhevsky, A.; Ibarz, J.; and Quillen, D. 2018.
\newblock Learning hand-eye coordination for robotic grasping with deep
  learning and large-scale data collection.
\newblock \emph{The International journal of robotics research}, 37(4-5):
  421--436.

\bibitem[{Li et~al.(2021)Li, Cai, Chen, Gu, Wei, and Chi}]{li2021q}
Li, G.; Cai, C.; Chen, Y.; Gu, Y.; Wei, Y.; and Chi, Y. 2021.
\newblock Is Q-learning minimax optimal? a tight sample complexity analysis.
\newblock \emph{arXiv preprint arXiv:2102.06548}.

\bibitem[{Li et~al.(2022)Li, Shi, Chen, Chi, and Wei}]{li2022settling}
Li, G.; Shi, L.; Chen, Y.; Chi, Y.; and Wei, Y. 2022.
\newblock Settling the sample complexity of model-based offline reinforcement
  learning.
\newblock \emph{arXiv preprint arXiv:2204.05275}.

\bibitem[{Liu et~al.(2018)Liu, Li, Tang, and Zhou}]{liu2018breaking}
Liu, Q.; Li, L.; Tang, Z.; and Zhou, D. 2018.
\newblock Breaking the curse of horizon: Infinite-horizon off-policy
  estimation.
\newblock \emph{Advances in Neural Information Processing Systems}, 31.

\bibitem[{Mannor et~al.(2004)Mannor, Simester, Sun, and
  Tsitsiklis}]{mannor2004bias}
Mannor, S.; Simester, D.; Sun, P.; and Tsitsiklis, J.~N. 2004.
\newblock Bias and variance in value function estimation.
\newblock In \emph{Proceedings of the 21st International Conference on Machine
  Learning}, 72.

\bibitem[{Munos and Szepesv{\'a}ri(2008)}]{munos2008finite}
Munos, R.; and Szepesv{\'a}ri, C. 2008.
\newblock Finite-Time Bounds for Fitted Value Iteration.
\newblock \emph{Journal of Machine Learning Research}, 9(5).

\bibitem[{Nilim and {El Ghaoui}(2005)}]{NE:05}
Nilim, A.; and {El Ghaoui}, L. 2005.
\newblock Robust Control of {M}arkov Decision Processes with Uncertain
  Transition Matrices.
\newblock \emph{Operations Research}, 53(5): 780--798.

\bibitem[{Rashidinejad et~al.(2021)Rashidinejad, Zhu, Ma, Jiao, and
  Russell}]{rashidinejad2021bridging}
Rashidinejad, P.; Zhu, B.; Ma, C.; Jiao, J.; and Russell, S. 2021.
\newblock Bridging offline reinforcement learning and imitation learning: A
  tale of pessimism.
\newblock \emph{Advances in Neural Information Processing Systems}, 34.

\bibitem[{Ross and Bagnell(2012)}]{ross2012agnostic}
Ross, S.; and Bagnell, J.~A. 2012.
\newblock Agnostic system identification for model-based reinforcement
  learning.
\newblock In \emph{Proceedings of the 29th International Conference on Machine
  Learning}, 1905--1912.

\bibitem[{Shortreed et~al.(2011)Shortreed, Laber, Lizotte, Stroup, Pineau, and
  Murphy}]{shortreed2011informing}
Shortreed, S.~M.; Laber, E.; Lizotte, D.~J.; Stroup, T.~S.; Pineau, J.; and
  Murphy, S.~A. 2011.
\newblock Informing sequential clinical decision-making through reinforcement
  learning: an empirical study.
\newblock \emph{Machine learning}, 84(1): 109--136.

\bibitem[{Strehl and Littman(2008)}]{strehl2008analysis}
Strehl, A.~L.; and Littman, M.~L. 2008.
\newblock An analysis of model-based interval estimation for Markov decision
  processes.
\newblock \emph{Journal of Computer and System Sciences}, 74(8): 1309--1331.

\bibitem[{Swaminathan et~al.(2017)Swaminathan, Krishnamurthy, Agarwal, Dudik,
  Langford, Jose, and Zitouni}]{swaminathan2017off}
Swaminathan, A.; Krishnamurthy, A.; Agarwal, A.; Dudik, M.; Langford, J.; Jose,
  D.; and Zitouni, I. 2017.
\newblock Off-policy evaluation for slate recommendation.
\newblock \emph{Advances in Neural Information Processing Systems}, 30.

\bibitem[{Tseng et~al.(2017)Tseng, Luo, Cui, Chien, Ten~Haken, and
  Naqa}]{tseng2017deep}
Tseng, H.-H.; Luo, Y.; Cui, S.; Chien, J.-T.; Ten~Haken, R.~K.; and Naqa, I.~E.
  2017.
\newblock Deep reinforcement learning for automated radiation adaptation in
  lung cancer.
\newblock \emph{Medical physics}, 44(12): 6690--6705.

\bibitem[{Uehara, Huang, and Jiang(2020)}]{uehara2020minimax}
Uehara, M.; Huang, J.; and Jiang, N. 2020.
\newblock Minimax weight and q-function learning for off-policy evaluation.
\newblock In \emph{Proceedings of the 37th International Conference on Machine
  Learning}, 9659--9668. PMLR.

\bibitem[{Uehara and Sun(2022)}]{uehara2021pessimistic}
Uehara, M.; and Sun, W. 2022.
\newblock Pessimistic Model-based Offline Reinforcement Learning under Partial
  Coverage.
\newblock In \emph{International Conference on Learning Representations}.

\bibitem[{Wasserman(2006)}]{wasserman2006all}
Wasserman, L. 2006.
\newblock \emph{All of nonparametric statistics}.
\newblock Springer Science \& Business Media.

\bibitem[{Weissman et~al.(2003)Weissman, Ordentlich, Seroussi, Verdu, and
  Weinberger}]{weissman2003inequalities}
Weissman, T.; Ordentlich, E.; Seroussi, G.; Verdu, S.; and Weinberger, M.~J.
  2003.
\newblock Inequalities for the L1 deviation of the empirical distribution.
\newblock \emph{Hewlett-Packard Labs, Tech. Rep}.

\bibitem[{Xie, Ma, and Wang(2019)}]{xie2019towards}
Xie, T.; Ma, Y.; and Wang, Y.-X. 2019.
\newblock Towards optimal off-policy evaluation for reinforcement learning with
  marginalized importance sampling.
\newblock \emph{Advances in Neural Information Processing Systems}, 32.

\bibitem[{Yan et~al.(2022)Yan, Li, Chen, and Fan}]{yan2022efficacy}
Yan, Y.; Li, G.; Chen, Y.; and Fan, J. 2022.
\newblock The efficacy of pessimism in asynchronous Q-learning.
\newblock \emph{arXiv preprint arXiv:2203.07368}.

\bibitem[{Yin, Bai, and Wang(2021)}]{yin2021near}
Yin, M.; Bai, Y.; and Wang, Y.-X. 2021.
\newblock Near-optimal provable uniform convergence in offline policy
  evaluation for reinforcement learning.
\newblock In \emph{International Conference on Artificial Intelligence and
  Statistics}, 1567--1575. PMLR.

\bibitem[{Yu et~al.(2021)Yu, Kumar, Rafailov, Rajeswaran, Levine, and
  Finn}]{yu2021combo}
Yu, T.; Kumar, A.; Rafailov, R.; Rajeswaran, A.; Levine, S.; and Finn, C. 2021.
\newblock Combo: Conservative offline model-based policy optimization.
\newblock \emph{Advances in Neural Information Processing Systems}, 34.

\bibitem[{Yu et~al.(2020)Yu, Thomas, Yu, Ermon, Zou, Levine, Finn, and
  Ma}]{yu2020mopo}
Yu, T.; Thomas, G.; Yu, L.; Ermon, S.; Zou, J.~Y.; Levine, S.; Finn, C.; and
  Ma, T. 2020.
\newblock Mopo: Model-based offline policy optimization.
\newblock \emph{Advances in Neural Information Processing Systems}, 33:
  14129--14142.

\bibitem[{Zeng et~al.(2018)Zeng, Song, Welker, Lee, Rodriguez, and
  Funkhouser}]{zeng2018learning}
Zeng, A.; Song, S.; Welker, S.; Lee, J.; Rodriguez, A.; and Funkhouser, T.
  2018.
\newblock Learning synergies between pushing and grasping with self-supervised
  deep reinforcement learning.
\newblock In \emph{2018 IEEE/RSJ International Conference on Intelligent Robots
  and Systems (IROS)}, 4238--4245. IEEE.

\bibitem[{Zhou et~al.(2017)Zhou, Small, Rokhlenko, and Elkan}]{zhou2017end}
Zhou, L.; Small, K.; Rokhlenko, O.; and Elkan, C. 2017.
\newblock End-to-end offline goal-oriented dialog policy learning via policy
  gradient.
\newblock \emph{arXiv preprint arXiv:1712.02838}.

\end{thebibliography}

\appendix

\newpage
\onecolumn

\begin{center}
    \textbf{\LARGE Supplementary Material}
\end{center}

 \section{Proof of Theorem \ref{thm:numberofrequiredpaths} and Corollary \ref{corollary:numberofrequiredpathsoptimization}}
 \label{appendix}

In this section, we explain the derivation of Theorem \ref{thm:numberofrequiredpaths} and Corollary \ref{corollary:numberofrequiredpathsoptimization}. We begin with the proofs of technical lemmas.

\begin{proof}[Proof of Lemma \ref{lemma:closemdpcloseestimate}]
We prove \(|\hat{\valuefunction{\targetpolicy}}(\initialstate) - \valuefunction{\targetpolicy}(\initialstate)| \leq \nicefrac{(\cardstates \cardactions)^{\beta}}{2(1-\processcontinueprob)^{1-\beta}} \) by showing that the flow difference under transition probability functions \(\probs\) and \(\hat{\probs}\) is bounded.

We define an MDP \(\overline{\mdp}=(\overline{\states}, \overline{\actions}, \overline{\probs}, \overline{\rewards}, \initialstate)\) that shares the same actions, reward function, and initial state with \(\mdp\).
\begin{itemize}
    \item The states of \(\overline{\mdp}\) is defined as \(\overline{\states} = \states \cup \Delta\) where \(\Delta= \lbrace \Delta_{(\genericstate, \genericaction)} | \genericstate \in \states \setminus \lbrace \deadstate \rbrace, \genericaction \in \actions\rbrace\).
    \item The actions of \(\overline{\mdp}\) is defined as \(\overline{\actions} = \actions \cup \lbrace \epsilon \rbrace\) where the available actions are the same with \(\mdp\) for the states in \(\states\), and the states in \(\lbrace \Delta_{(\genericstate, \genericaction)} | \genericstate \in \states \setminus \lbrace \deadstate \rbrace, \genericaction \in \actions\rbrace\) has a single action \(\epsilon\).
    
    \item The transition probability function of \(\overline{\mdp}\) is defined as \(\overline{\probs}(\genericstate, \genericaction, \altstate) = \min(\probs(\genericstate, \genericaction, \altstate), \hat{\probs}(\genericstate, \genericaction, \altstate))\) for all \(\genericstate \in \states \setminus \lbrace \deadstate \rbrace\), \(\genericaction \in \actions\), and \(\altstate \in \states\). In addition,  \(\overline{\probs}(\genericstate, \genericaction, \Delta_{(\genericstate, \genericaction)}) = \sum_{\altstate \in \states} | \probs(\genericstate, \genericaction, \altstate)- \hat{\probs}(\genericstate, \genericaction, \altstate) |/2\) for \(\genericstate \in \states \setminus \lbrace \deadstate \rbrace\) and \(\genericaction \in \actions\). Finally, \(\overline{\probs}(\Delta_{(\genericstate, \genericaction)}, \epsilon,\Delta_{(\genericstate, \genericaction)} ) = \processcontinueprob \) and \(\overline{\probs}(\Delta_{(\genericstate, \genericaction)}, \epsilon,\deadstate ) =1- \processcontinueprob \)  for all \(\Delta_{(\genericstate, \genericaction)} \in \Delta\).
    
    \item The reward function is defined such that \(\expectation{ \overline{\rewards}(\genericstate, \genericaction) } = \expectation{ \rewards(\genericstate, \genericaction) }\) if \(\genericstate \in \states\) and \(\genericaction \in \actions\), and \(\expectation{ \overline{\rewards}(\Delta_{(\genericstate, \genericaction)}, \epsilon) } = 1\) if \(\Delta_{(\genericstate, \genericaction)} \in \overline{\states} \setminus \states\) for all \(\Delta_{(\genericstate, \genericaction)} \in \Delta\).
\end{itemize}  In words, the state \(\Delta_{(\genericstate, \genericaction)}\) represents the state where the flow difference will be redirected if the transition probability functions \(\probs\) and \(\hat{\probs}\) disagree at \((\genericstate, \genericaction)\). Note that the flow difference at \((\genericstate, \genericaction)\) goes to a state \(\Delta_{(\genericstate, \genericaction)}\) where there is a single action \(\epsilon\) and the maximum possible reward is collected.

We define another MDP \(\underline{\mdp}=(\underline{\states}, \underline{\actions}, \underline{\probs}, \underline{\rewards}, \initialstate)\) that shares the same actions, reward function, and initial state with \(\mdp\). 
\begin{itemize}
    \item The states \(\underline{\states}\) of \(\underline{\mdp}\) is the same as \( \overline{\states}\).
    
    \item The transition probability function \(\underline{\probs}\) of \(\underline{\mdp}\) is the same as \(\overline{\probs}\).
    
    \item The reward function is defined such that \(\expectation{ \underline{\rewards}(\genericstate, \genericaction) } = \expectation{ \rewards(\genericstate, \genericaction) }\) if \(\genericstate \in \states\) and \(\genericaction \in \actions\), and \(\expectation{ \underline{\rewards}(\Delta_{(\genericstate, \genericaction)}, \epsilon) } = 0\) if \(\Delta_{(\genericstate, \genericaction)} \in \underline{\states} \setminus \states\) for all \(\Delta_{(\genericstate, \genericaction)} \in \Delta\).
\end{itemize}  Note that the only difference between \(\underline{\mdp}\) and \(\overline{\mdp}\) is the reward function: The flow difference at \((\genericstate, \genericaction)\) goes to a state \(\Delta_{(\genericstate, \genericaction)}\) where the minimum possible reward is collected.

Define an MDP \(\hat{\mdp}=(\states, \actions, \hat{\probs}, \rewards, \initialstate)\) that represents the estimated model.
 Let \begin{itemize}
     \item \(\valuefunction{\targetpolicy}_{\mdp}(\initialstate)\) be the value and  \(\occupancymeasure^{\targetpolicy}_{\mdp}(\genericstate, \genericaction)\) be the occupancy measure of \(\targetpolicy\) for \(\mdp\),
     \item \(\valuefunction{\targetpolicy}_{\hat{\mdp}}(\initialstate)\) be the value and  \(\occupancymeasure^{\targetpolicy}_{\hat{\mdp}}(\genericstate, \genericaction)\) be the occupancy measure of \(\targetpolicy\) for \(\hat{\mdp}\),
     \item \(\valuefunction{\targetpolicy}_{\overline{\mdp}}(\initialstate)\) be the value and  \(\occupancymeasure^{\targetpolicy}_{\overline{\mdp}}(\genericstate, \genericaction)\) be the occupancy measure of \(\targetpolicy\) for \(\overline{\mdp}\), and
     \item   \(\valuefunction{\targetpolicy}_{\underline{\mdp}}(\initialstate)\) be the value and  \(\occupancymeasure^{\targetpolicy}_{\underline{\mdp}}(\genericstate, \genericaction)\) be the occupancy measure of \(\targetpolicy\) for \(\underline{\mdp}\).
 \end{itemize}
 
 We note that \(\valuefunction{\targetpolicy}_{\mdp}(\initialstate) \leq \valuefunction{\targetpolicy}_{\overline{\mdp}}(\initialstate)\) and \(\valuefunction{\targetpolicy}_{\hat{\mdp}}(\initialstate) \leq \valuefunction{\targetpolicy}_{\overline{\mdp}}(\initialstate)\) since the redirected flow collects the maximum possible reward in \(\overline{\mdp}.\)  Similarly, note that \(\valuefunction{\targetpolicy}_{\mdp}(\initialstate) \geq \valuefunction{\targetpolicy}_{\underline{\mdp}}(\initialstate)\) and \(\valuefunction{\targetpolicy}_{\hat{\mdp}}(\initialstate) \geq \valuefunction{\targetpolicy}_{\underline{\mdp}}(\initialstate)\) since the redirected flow collects the maximum possible reward in \(\overline{\mdp}.\) Overall, we have \[\valuefunction{\targetpolicy}_{\underline{\mdp}}(\initialstate) \leq \valuefunction{\targetpolicy}_{\hat{\mdp}}(\initialstate), \valuefunction{\targetpolicy}_{\mdp}(\initialstate)  \leq  \valuefunction{\targetpolicy}_{\overline{\mdp}}(\initialstate). \]
 
 Now, we show that \(|\valuefunction{\targetpolicy}_{\underline{\mdp}}(\initialstate) - \valuefunction{\targetpolicy}_{\overline{\mdp}}(\initialstate)|\) is bounded. Note that \(\occupancymeasure^{\targetpolicy}_{\underline{\mdp}}(\genericstate, \genericaction) = \occupancymeasure^{\targetpolicy}_{\overline{\mdp}}(\genericstate, \genericaction)\) for all \(\states \setminus \lbrace \deadstate \rbrace\) and \(\genericaction \in \actions\). Also, \(\occupancymeasure^{\targetpolicy}_{\overline{\mdp}}(\genericstate, \genericaction) \leq \occupancymeasure^{\targetpolicy}_{\mdp}(\genericstate, \genericaction)\) for all \(\states \setminus \lbrace \deadstate \rbrace\) and \(\genericaction \in \actions\) since the flow that goes to \(\Delta\) never comes back to \(\states \setminus \lbrace \deadstate \rbrace\), and the remaining flow is always lower for \(\overline{\mdp}\). For the states in \(\Delta\), we have \(\occupancymeasure^{\targetpolicy}_{\overline{\mdp}}(\Delta_{(\genericstate, \genericaction)}, \epsilon) = \frac{1}{1-\processcontinueprob} \occupancymeasure^{\targetpolicy}_{\overline{\mdp}}(\genericstate, \genericaction) \sum_{\altstate \in \states}\overline{\probs}(\genericstate, \genericaction, \Delta_{(\genericstate, \genericaction)})\). The value functions have the following relationship
 \begin{align*}
      \valuefunction{\targetpolicy}_{\overline{\mdp}}(\initialstate)  &=\left( \sum_{\genericstate \in \states} \sum_{\genericaction \in \actions} \occupancymeasure^{\targetpolicy}_{\overline{\mdp}}(\genericstate, \genericaction) \expectation{ \rewards(\genericstate, \genericaction) } \right) + \left( \sum_{\Delta_{(\genericstate, \genericaction)} \in \Delta}  \occupancymeasure^{\targetpolicy}_{\overline{\mdp}}(\Delta_{(\genericstate, \genericaction)}, \epsilon) \expectation{ \overline{\rewards}(\Delta_{(\genericstate, \genericaction)}, \epsilon) } \right)
      \\
     &=       \valuefunction{\targetpolicy}_{\underline{\mdp}}(\initialstate)  + \left( \sum_{\Delta_{(\genericstate, \genericaction)} \in \Delta}  \occupancymeasure^{\targetpolicy}_{\overline{\mdp}}(\Delta_{(\genericstate, \genericaction)}, \epsilon) \expectation{ \overline{\rewards}(\Delta_{(\genericstate, \genericaction)}, \epsilon) } \right)
      \\
      &= \valuefunction{\targetpolicy}_{\underline{\mdp}}(\initialstate)+ \frac{1}{1-\processcontinueprob}\left( \sum_{\Delta_{(\genericstate, \genericaction)} \in \Delta}  \occupancymeasure^{\targetpolicy}_{\overline{\mdp}}(\genericstate, \genericaction) \sum_{\altstate \in \states}\overline{\probs}(\genericstate, \genericaction, \Delta_{(\genericstate, \genericaction)})\right)
            \\
      &\leq \valuefunction{\targetpolicy}_{\underline{\mdp}}(\initialstate)+ \frac{1}{1-\processcontinueprob} \left( \sum_{\Delta_{(\genericstate, \genericaction)} \in \Delta}  \occupancymeasure^{\targetpolicy}_{\mdp}(\genericstate, \genericaction) \sum_{\altstate \in \states}\overline{\probs}(\genericstate, \genericaction, \Delta_{(\genericstate, \genericaction)})\right)
      \\
      &\leq \valuefunction{\targetpolicy}_{\underline{\mdp}}(\initialstate)+ \frac{1}{1-\processcontinueprob} \left( \sum_{\Delta_{(\genericstate, \genericaction)} \in \Delta}  \occupancymeasure^{\targetpolicy}_{\mdp}(\genericstate, \genericaction) \frac{\alpha}{2 \occupancymeasure^{\targetpolicy}_{\mdp}(\genericstate, \genericaction)^{\beta}}\right)
      \\
      &= \valuefunction{\targetpolicy}_{\underline{\mdp}}(\initialstate)+  \frac{1}{1-\processcontinueprob} \sum_{\Delta_{(\genericstate, \genericaction)} \in \Delta}   \frac{\alpha \occupancymeasure^{\targetpolicy}_{\mdp}(\genericstate, \genericaction)^{1-\beta}}{2}
 \end{align*}
where the last inequality is due to \[\overline{\probs}(\genericstate, \genericaction, \Delta_{(\genericstate, \genericaction)}) = \frac{\sum_{\altstate \in \states} |\hat{\probs}(\genericstate, \genericaction, \altstate) - \probs(\genericstate, \genericaction, \altstate)|}{2} \leq \frac{\alpha}{2 \occupancymeasure^{\targetpolicy}(\genericstate, \genericaction)^{\beta}}.\] Since \(\sum_{\Delta_{(\genericstate, \genericaction)} \in \Delta} \occupancymeasure^{\targetpolicy}_{\mdp}(\genericstate, \genericaction) \leq \frac{1}{1-\processcontinueprob} \), \(|\Delta| = \cardstates \cardactions\), and \(x^{1-\beta}\) is a concave function of \(x\) for \(0 \leq \beta \leq 1\), we have \[ \valuefunction{\targetpolicy}_{\underline{\mdp}}(\initialstate)+  \sum_{\Delta_{(\genericstate, \genericaction)} \in \Delta}   \frac{\alpha \occupancymeasure^{\targetpolicy}_{\mdp}(\genericstate, \genericaction)^{1-\beta}}{2 (1-\processcontinueprob)} \leq \valuefunction{\targetpolicy}_{\underline{\mdp}}(\initialstate) + \sum_{\Delta_{(\genericstate, \genericaction)} \in \Delta}   \frac{\alpha \left(\frac{1}{\cardstates \cardactions (1-\processcontinueprob)}\right)^{1-\beta}}{2(1-\processcontinueprob)} = \valuefunction{\targetpolicy}_{\underline{\mdp}}(\initialstate) + \frac{\alpha (\cardstates \cardactions)^{\beta}}{2(1-\processcontinueprob)^{2-\beta}}.\]

Since \[\valuefunction{\targetpolicy}_{\underline{\mdp}}(\initialstate) \leq \valuefunction{\targetpolicy}_{\hat{\mdp}}(\initialstate), \valuefunction{\targetpolicy}_{\mdp}(\initialstate)  \leq  \valuefunction{\targetpolicy}_{\overline{\mdp}}(\initialstate)\] and \[|\valuefunction{\targetpolicy}_{\underline{\mdp}}(\initialstate) - \valuefunction{\targetpolicy}_{\overline{\mdp}}(\initialstate)| \leq   \frac{\alpha (\cardstates \cardactions)^{\beta}}{2(1-\processcontinueprob)^{2-\beta}},\] we have \[|\valuefunction{\targetpolicy}_{\mdp}(\initialstate) - \valuefunction{\targetpolicy}_{\hat{\mdp}}(\initialstate)| \leq   \frac{\alpha (\cardstates \cardactions)^{\beta}}{2(1-\processcontinueprob)^{2-\beta}}.\]

\end{proof}

\newpage

\begin{proof}[Proof of Lemma \ref{lemma:numberofsamplesfromsaq}] 
We first define some notation to derive a under concentration bound that works with a random stopping time. We note that \[\hat{\probs}(\genericstate, \genericaction, \altstate) =  \frac{\processcontinueprob \numberofsampletransitions(\genericstate, \genericaction, \altstate)}{\sum_{\altstate \in \states\setminus \lbrace \deadstate \rbrace }\numberofsampletransitions(\genericstate, \genericaction, \altstate)}  \] for all $\altstate \in \states \setminus \lbrace \deadstate \rbrace$ assuming that $\sum_{\altstate \in \states\setminus \lbrace \deadstate \rbrace }\numberofsampletransitions(\genericstate, \genericaction, \altstate) \geq 1$.  Let $\hat{R}(\genericstate, \genericaction, \altstate) = \hat{\probs}(\genericstate, \genericaction, \altstate)/ \processcontinueprob$. 

Let $m \geq 1$ be a fixed integer. Let $\numberofsampletransitions_{m}(\genericstate, \genericaction, \altstate)$ denote the number of sample transitions from $\genericstate$ to $\altstate$ under action $\genericaction$ within the first $m$ sample transitions from $\genericstate$ to $\states\setminus \lbrace \deadstate \rbrace$ under action $\genericaction$. We define \[\hat{\probs}_{m}(\genericstate, \genericaction, \altstate) =  \frac{\processcontinueprob \numberofsampletransitions_{m}(\genericstate, \genericaction, \altstate)}{m}.  \] Similar to the relationship between $\hat{\probs}(\genericstate, \genericaction, \altstate)$ and $\hat{R}(\genericstate, \genericaction, \altstate)$, we define $\hat{R}_{m}(\genericstate, \genericaction, \altstate) = \hat{\probs}_{m}(\genericstate, \genericaction, \altstate)/ \processcontinueprob$.

By Theorem 2.1 of \cite{weissman2003inequalities}, we have 
\begin{equation}
    \Pr\left(\sum_{\altstate \in \states \setminus \lbrace \deadstate \rbrace} |\hat{R}_{m}(\genericstate, \genericaction, \altstate) - \probs(\genericstate, \genericaction, \altstate)/\processcontinueprob| \geq c  \right) \leq 2^{\cardstates} \exp(-m c^2/2) \label{ineq:empricall1bound}
\end{equation}
 for all \(c > 0\). Let \(\mu = \sum_{\altstate \in \states\setminus \lbrace \deadstate \rbrace }\numberofsampletransitions(\genericstate, \genericaction, \altstate)\) be the random stopping time with respect to the filtration generated by the previous sample transitions. Note that this stopping time is potentially dependent on the previous transitions from all states. By letting \(c = \sqrt{\frac{2}{\mu}\left( 2 \log(\mu) + \log\left(\nicefrac{2^\cardstates 5}{3 \failureprob'}\right) \right)}\), we get 

\begin{align}
    &\Pr\left(\sum_{\altstate \in \states \setminus \lbrace \deadstate \rbrace} |\hat{R}_{\mu}(\genericstate, \genericaction, \altstate) - \probs(\genericstate, \genericaction, \altstate)/\processcontinueprob| \geq \sqrt{\frac{2}{\mu}\left( 2 \log(\mu) + \log\left(\nicefrac{2^\cardstates 5 }{3 \failureprob'}\right) \right)}  \right)
    \\
    &\leq \sum_{m = 1}^{\infty} \Pr\left(  \sum_{\altstate \in \states \setminus \lbrace \deadstate \rbrace} |\hat{R}_{m}(\genericstate, \genericaction, \altstate) - \probs(\genericstate, \genericaction, \altstate)/\processcontinueprob| \geq \sqrt{\frac{2}{m}\left( 2 \log(m) + \log\left(\nicefrac{2^\cardstates 5 }{3 \failureprob'}  \right)  \right)} \right) \label{ineq:unionbound}
     \\
    &\leq \sum_{m = 1}^{\infty}  2^{\cardstates} \exp\left( -\frac{m}{2} \sqrt{\frac{2}{m}\left( 2 \log(m) + \log\left(\nicefrac{2^\cardstates 5}{3 \failureprob'}  \right)  \right)}^2 \right)  = \leq \sum_{m = 1}^{\infty}  \frac{\failureprob'}{m^2} = \frac{\pi^2 \failureprob'}{10} \leq  \failureprob' \label{ineq:l1boundapplied}
\end{align}
where \eqref{ineq:unionbound} is due to the union bound and \eqref{ineq:l1boundapplied} is due to \eqref{ineq:empricall1bound}.

If \[\mu = \sum_{\altstate \in \states\setminus \lbrace \deadstate \rbrace }\numberofsampletransitions(\genericstate, \genericaction, \altstate) \geq \frac{40 \cardstates}{(\optimalitygap')^2} \log\left(\frac{1}{\optimalitygap'}\right) \log\left( \frac{5}{3 \failureprob'} \right), \] we have 

\begin{align}
    &\sqrt{\frac{2}{\mu}\left( 2 \log(\mu) + \log\left(\nicefrac{2^\cardstates 5 }{3 \failureprob'}\right) \right)} 
    \\
    &= \sqrt{\frac{4 \log\left( 40 \cardstates (\nicefrac{1}{\optimalitygap'})^2 \log(\nicefrac{1}{\optimalitygap'}) \log(\nicefrac{5}{3 \failureprob'})  \right)}{  30 \cardstates (\nicefrac{1}{\optimalitygap'})^2 \log(\nicefrac{1}{\optimalitygap'}) \log(\nicefrac{5}{3 \failureprob'}) } + \frac{\log(\nicefrac{2^\cardstates 5 }{3 \failureprob'})}{  40 \cardstates (\nicefrac{1}{\optimalitygap'})^2 \log(\nicefrac{1}{\optimalitygap'}) \log(\nicefrac{5}{3 \failureprob'}) }}
    \\
    &= \optimalitygap' \left( \frac{4 \log (40 \cardstates)}{40 \cardstates \log(\nicefrac{1}{\optimalitygap'}) \log(\nicefrac{5}{3 \failureprob'})} + \frac{8 \log (\nicefrac{1}{\optimalitygap'})}{40 \cardstates \log(\nicefrac{1}{\optimalitygap'}) \log(\nicefrac{5}{3 \failureprob'})} + \frac{4 \log(\log (\nicefrac{1}{\optimalitygap'}))}{40 \cardstates \log(\nicefrac{1}{\optimalitygap'}) \log(\nicefrac{5}{3 \failureprob'})}  \right. \nonumber
    \\
    & \left.  + \frac{4 \log(\log (\nicefrac{5}{3 \failureprob'}))}{40 \cardstates \log(\nicefrac{1}{\optimalitygap'}) \log(\nicefrac{5}{3 \failureprob'})} +  \frac{\log(\nicefrac{2^\cardstates 5 }{3 \failureprob'})}{40 \cardstates \log(\nicefrac{1}{\optimalitygap'}) \log(\nicefrac{5}{3 \failureprob'})} \right)^{1/2}
    \\
    &\leq \optimalitygap' \sqrt{\frac{4 \log (40 \cardstates)}{40 \cardstates} + \frac{8 \log (\nicefrac{1}{\optimalitygap'})}{40 \cardstates \log(\nicefrac{1}{\optimalitygap'})} + \frac{4 \log(\log (\nicefrac{1}{\optimalitygap'}))}{40 \cardstates \log(\nicefrac{1}{\optimalitygap'})}  + \frac{4 \log(\log (\nicefrac{5}{3 \failureprob'}))}{40 \cardstates \log(\nicefrac{5}{3 \failureprob'})} +  \frac{\log(2) \cardstates \log(\nicefrac{ 5 }{3 \failureprob'})}{40 \cardstates \log(\nicefrac{5}{3 \failureprob'})} } \label{ineq:somearelargerthan1}
    \\
    &\leq \optimalitygap' \sqrt{\frac{18}{80} + \frac{8 }{80 } + \frac{ 4}{80}  + \frac{4 }{80} +  \frac{2 }{40} }  \leq \optimalitygap' \label{ineq:Sislargerthan2}
\end{align}
where \eqref{ineq:somearelargerthan1} is due to \(\log(\nicefrac{1}{\optimalitygap'}) \geq 1\) and \(\log(\nicefrac{5}{3 \failureprob'}) \geq 1\), and \eqref{ineq:Sislargerthan2} is due to \(\cardstates \geq 2\) and \(x \geq \log(x)\). Consequently, if \[\mu = \sum_{\altstate \in \states\setminus \lbrace \deadstate \rbrace }\numberofsampletransitions(\genericstate, \genericaction, \altstate) \geq \frac{40 \cardstates}{(\optimalitygap')^2} \log\left(\frac{1}{\optimalitygap'}\right) \log\left( \frac{5}{3 \failureprob'} \right), \] we have 
\[\Pr\left(\sum_{\altstate \in \states \setminus \lbrace \deadstate \rbrace} |\hat{R}_{\mu}(\genericstate, \genericaction, \altstate) - \probs(\genericstate, \genericaction, \altstate)/\processcontinueprob| \geq \optimalitygap'  \right) \leq \failureprob'.\] 

Note that \(|\hat{R}_{\mu}(\genericstate, \genericaction, \altstate) - \probs(\genericstate, \genericaction, \altstate)/\processcontinueprob| = | \hat{\probs}_{\mu}(\genericstate, \genericaction, \altstate) - \probs(\genericstate, \genericaction, \altstate)| / \processcontinueprob\) for all \(\altstate \in \states \setminus \lbrace \deadstate \rbrace\), and  \(| \hat{\probs}_{\mu}(\genericstate, \genericaction, \deadstate) - \probs(\genericstate, \genericaction, \deadstate)|  = 0\) by definition. Hence, if \[\mu = \sum_{\altstate \in \states\setminus \lbrace \deadstate \rbrace }\numberofsampletransitions(\genericstate, \genericaction, \altstate) \geq \frac{40 \cardstates}{(\optimalitygap')^2} \log\left(\frac{1}{\optimalitygap'}\right) \log\left( \frac{5}{3 \failureprob'} \right), \] we have 
\[\Pr\left(\sum_{\altstate \in \states} |\hat{\probs}_{\mu}(\genericstate, \genericaction, \altstate) - \probs(\genericstate, \genericaction, \altstate)| \geq \processcontinueprob \optimalitygap'  \right) \leq \failureprob'.\]

\end{proof}

\newpage

\begin{proof}[Proof of Lemma \ref{lemma:numberofpathswithsample}]
A sample path has at least 1 sample from \(\genericstate, \genericaction\) with probability \(\reachprob^{\behaviorpolicy}(\genericstate, \genericaction)\). Every path that has at least 1 sample from \(\genericstate, \genericaction\), has a sample transition from \(\genericstate\) to \(\genericstate \setminus \lbrace \deadstate \rbrace\) under action \(\genericaction\) with probability \(\processcontinueprob\). Overall, a sample path has at least 1 sample from \(\genericstate\) to \(\genericstate \setminus \lbrace \deadstate \rbrace\) under action \(\genericaction\) with probability \(\processcontinueprob \reachprob^{\behaviorpolicy}(\genericstate, \genericaction)\) independent from the other sample paths. 

Let \(X^{i}\) be a Bernoulli random variable that is \(1\) if \(i\)-th sample path has at least 1 sample from \(\genericstate\) to \(\genericstate \setminus \lbrace \deadstate \rbrace\) under action \(\genericaction\), and 0 otherwise. We have 
\[\Pr\left(\sum_{i=1}^{\numberofpaths} X^{i} \leq (1-\varepsilon') \mu \right) \leq \exp(-\nicefrac{\mu (\varepsilon')^2}{3})\] by the Chernoff-Hoeffding bound where \(\mu = \numberofpaths \processcontinueprob  \reachprob^{\behaviorpolicy}(\genericstate, \genericaction)\) and \(\varepsilon' \in (0,1)\). Let \(\varepsilon' = 1 - \nicefrac{\numberofpaths'}{\numberofpaths \processcontinueprob  \reachprob^{\behaviorpolicy}(\genericstate, \genericaction)}\).

Let \[\numberofpaths \geq \frac{6}{\processcontinueprob \reachprob^{\behaviorpolicy}(\genericstate, \genericaction)} \max \left(  \numberofpaths',  \log(\nicefrac{1}{\failureprob'})\right).\] If \( \numberofpaths' \geq \log(\nicefrac{1}{\failureprob'})\), then \(\varepsilon' \geq \nicefrac{5}{6} \), \(\mu \geq 6N'\), and 
\[\Pr\left(\sum_{i=1}^{\numberofpaths} X^{i} \leq \numberofpaths'  \right) \leq \exp\left(-\frac{150 \numberofpaths' }{108}\right) \leq \exp\left(-\frac{150 \log(\nicefrac{1}{\failureprob'}) }{108}\right) \leq \failureprob'.\] 
If \( \log(\nicefrac{1}{\failureprob'}) < \numberofpaths' \), then \(\varepsilon' \geq \nicefrac{5}{6} \), \(\mu\geq 6 \log(\nicefrac{1}{\failureprob'})\), and \[\Pr\left(\sum_{i=1}^{\numberofpaths} X^{i} \leq \numberofpaths'  \right) \leq \exp\left(-\frac{6 \log(\nicefrac{1}{\failureprob'}) (\varepsilon')^2 }{3}\right) \leq \exp\left(-\frac{150 \log(\nicefrac{1}{\failureprob'}) }{108}\right) \leq \failureprob'.\] 

\end{proof}

\newpage

\begin{proof}[Proof of Lemma \ref{lemma:numberofsamplesperstateaction}]

We first note that the number of samples from a state-action pair over a path is a geometric random variable with parameter \(1-\loopprob^{\behaviorpolicy}(\genericstate, \genericaction)\) given that there is a sample transition from that state-action pair in the path. Let \(\numberofsampletransitions^{i}(\genericstate, \genericaction, \altstate)\) denote the number of sample transitions from \(\genericstate\) to \(\altstate\) under action \(\genericaction\) in the \(i\)-th sample path that has a transition from \(\genericstate\) to \(\altstate\) under action \(\genericaction\). Formally, 
\[ \Pr \left(\sum_{\altstate \in \states }\numberofsampletransitions^{i}(\genericstate, \genericaction, \altstate) = m \Bigg| \sum_{\altstate \in \states }\numberofsampletransitions^{i}(\genericstate, \genericaction, \altstate) \geq 1 \right) = (1-\loopprob^{\behaviorpolicy}(\genericstate, \genericaction))^{m-1} \loopprob^{\behaviorpolicy}(\genericstate, \genericaction).   \] This is because of the stationarity of \(\behaviorpolicy\) and Markovianity of the MDP.

Similarly, we have \begin{equation} \label{bound:geometricreturn}
     \Pr \left(\sum_{\altstate \in \states \setminus \lbrace \deadstate \rbrace }\numberofsampletransitions^{i}(\genericstate, \genericaction, \altstate) = m \Bigg| \sum_{\altstate \in \states \setminus \lbrace \deadstate \rbrace }\numberofsampletransitions^{i}(\genericstate, \genericaction, \altstate) \geq 1 \right) = (1-\loopprob^{\behaviorpolicy}(\genericstate, \genericaction))^{m-1} \loopprob^{\behaviorpolicy}(\genericstate, \genericaction).
\end{equation} This is because of two facts: 
\begin{enumerate}
    \item The probability of transitioning to a state in \(\states \setminus \lbrace \deadstate \rbrace\) given that the current state is \(\genericstate\) and current action is \(\genericaction\), is \(\processcontinueprob\),
    \item The probability of taking action \(\genericaction\) at state \(\genericstate\) again under stationary policy \(\policy\) given that the current state is \(\genericstate\), current action is \(\genericaction\), and next state is in \(\states \setminus \lbrace \deadstate \rbrace\), is \(\nicefrac{\loopprob^{\behaviorpolicy}(\genericstate, \genericaction)}{\processcontinueprob}.\)
\end{enumerate}

Note that \( \sum_{i=1}^{\numberofpaths'} \sum_{\altstate \in \states \setminus \lbrace \deadstate \rbrace }\numberofsampletransitions^{i}(\genericstate, \genericaction, \altstate)\) is a sum of \(\numberofpaths'\) geometric random variables with parameter \(1-\loopprob^{\behaviorpolicy}(\genericstate, \genericaction)\). By Theorem 3.1 of \cite{janson2018tail}, we have 
\[\Pr \left( \sum_{i=1}^{\numberofpaths'} \sum_{\altstate \in \states \setminus \lbrace \deadstate \rbrace }\numberofsampletransitions^{i}(\genericstate, \genericaction, \altstate) \leq k \right) \leq \exp\left(-\numberofpaths' \left(-1 + \log\left( \frac{\numberofpaths'}{k (1- \loopprob^{\behaviorpolicy})}\right)\right)\right). \] Let \(\numberofpaths' = \max \left( 8 k (1-\loopprob^{\behaviorpolicy}(\genericstate, \genericaction)), \log(\nicefrac{1}{\failureprob'}) \right).\) If \(8 k (1-\loopprob^{\behaviorpolicy}(\genericstate, \genericaction)) \geq \log(\nicefrac{1}{\failureprob'})\), then \[\Pr \left( \sum_{i=1}^{\numberofpaths'} \sum_{\altstate \in \states \setminus \lbrace \deadstate \rbrace }\numberofsampletransitions^{i}(\genericstate, \genericaction, \altstate) \leq k \right) \leq \exp\left(-8 k (1-\loopprob^{\behaviorpolicy}(\genericstate, \genericaction)) \right) \leq \failureprob'. \]If \(8 k (1-\loopprob^{\behaviorpolicy}(\genericstate, \genericaction)) < \log(\nicefrac{1}{\failureprob'})\), then \[\Pr \left( \sum_{i=1}^{\numberofpaths'} \sum_{\altstate \in \states \setminus \lbrace \deadstate \rbrace }\numberofsampletransitions^{i}(\genericstate, \genericaction, \altstate) \leq k \right) \leq \exp\left(-\log(\nicefrac{1}{\failureprob'} )  \left(-1 + \log\left( \frac{\numberofpaths'}{k (1- \loopprob^{\behaviorpolicy})}\right)\right)\right) \leq  \failureprob'. \]

\end{proof}

\newpage

\begin{proof}[Proof of Theorem \ref{thm:numberofrequiredpaths}]
By Lemma \ref{lemma:closemdpcloseestimate}, if \[\sum_{\altstate \in \states} |\hat{\probs}(\genericstate, \genericaction, \altstate) - \probs(\genericstate, \genericaction, \altstate)| \leq \frac{\optimalitygap (1-\processcontinueprob)^{2-\beta}}{ \occupancymeasure^{\targetpolicy}(\genericstate, \genericaction)^{\beta} (\cardstates\cardactions)^{\beta}}\] for all \(\genericstate \in \states\) and \(\genericaction \in \actions\), then \[|\valuefunction{\targetpolicy}_{\mdp}(\initialstate) - \valuefunction{\targetpolicy}_{\hat{\mdp}}(\initialstate)| \leq \optimalitygap .\] The estimate for state \(\deadstate\) already satisfies \(\sum_{\altstate \in \states} |\hat{\probs}(\deadstate, \genericaction, \altstate) - \probs(\deadstate, \genericaction, \altstate)| \leq \frac{\optimalitygap (1-\processcontinueprob)^{2-\beta}}{ \occupancymeasure^{\targetpolicy}(\genericstate, \genericaction)^{\beta} (\cardstates\cardactions)^{\beta}}\). 
Define \[D = \left\lbrace (\genericstate, \genericaction) \bigg|  \occupancymeasure^{\targetpolicy}(\genericstate, \genericaction) \geq \frac{(\nicefrac{\optimalitygap}{2})^{\nicefrac{1}{\beta}} (1-\processcontinueprob)^{\nicefrac{(2-\beta)}{\beta}}}{\cardstates \cardactions} \right\rbrace.\] We note that every \((\genericstate, \genericaction) \not \in D\), already satisfies \(\sum_{\altstate \in \states} |\hat{\probs}(\genericstate, \genericaction, \altstate) - \probs(\genericstate, \genericaction, \altstate)| \leq \frac{\optimalitygap (1-\processcontinueprob)^{2-\beta}}{ \occupancymeasure^{\targetpolicy}(\genericstate, \genericaction)^{\beta} (\cardstates\cardactions)^{\beta}}.\)

By Lemma \ref{lemma:numberofsamplesfromsaq} and the facts that \(\occupancymeasure^{\targetpolicy}(\genericstate, \genericaction) \leq \nicefrac{1}{(1-\processcontinueprob)}\) and \(\beta \leq 1\), \[n_{\genericstate, \genericaction}:=\frac{40 \cardstates^{\beta + 1} \cardactions^{\beta}\processcontinueprob^{2}\occupancymeasure^{\targetpolicy}(\genericstate, \genericaction)^{2\beta}}{(1-\processcontinueprob)^{4-2\beta}\optimalitygap^2} \log\left(\frac{ \processcontinueprob \cardstates \cardactions}{(1-\processcontinueprob)^{3}\optimalitygap}\right) \log\left( \frac{5\cardstates\cardactions}{ \failureprob} \right)\] samples from every \((\genericstate, \genericaction) \in D\) to \(\states \setminus \lbrace \deadstate \rbrace\) is enough to achieve \(\frac{\optimalitygap (1-\processcontinueprob)^{2-\beta}}{ \occupancymeasure^{\targetpolicy}(\genericstate, \genericaction)^{\beta} (\cardstates\cardactions)^{\beta}}\) accuracy for every \(\genericstate \in \states \setminus \lbrace \deadstate \rbrace\) and \(\genericaction \in \actions\) with probability \(1- \nicefrac{\failureprob}{3}\) by the union bound.

Let \(N'\) be the number of sample paths that has a sample from state \(\genericstate\) to \(\states \setminus \lbrace \deadstate \rbrace\) under action \(\genericaction\). By Lemma \ref{lemma:numberofsamplesperstateaction}, if \[N' \geq \max \left( 8 n_{\genericstate, \genericaction}  (1-\loopprob^{\behaviorpolicy}(\genericstate, \genericaction)) , \log\left(\frac{3\cardstates \cardactions}{\failureprob}\right) \right), \] then the number of samples from \(\genericstate \) and \(\genericaction \) to \(\states \setminus \lbrace \deadstate \rbrace\) is at least \(n_{\genericstate,\genericaction}\) with probability at least 1-\(\nicefrac{\failureprob}{3 \cardstates \cardactions}\). By Lemma \ref{lemma:numberofpathswithsample} and the union bound, if the number of sample paths satisfy
\begin{align}
    \numberofpaths &\geq \frac{6}{\processcontinueprob \reachprob^{\behaviorpolicy}(\genericstate, \genericaction)} \max \left(  8 n_{\genericstate, \genericaction}  (1-\loopprob^{\behaviorpolicy}(\genericstate, \genericaction)) , \log\left(\frac{3\cardstates \cardactions}{\failureprob}\right) \right)
    \\
    &=\max   \left(  \frac{48 n_{\genericstate, \genericaction}}{\processcontinueprob \occupancymeasure^{\behaviorpolicy}(\genericstate, \genericaction)}, \frac{6}{\processcontinueprob \reachprob^{\behaviorpolicy}(\genericstate, \genericaction)} \log\left(\frac{3\cardstates \cardactions}{\failureprob}\right) \right),
\end{align} then the number of samples from \(\genericstate \) and \(\genericaction \) to \(\states \setminus \lbrace \deadstate \rbrace\) is at least \(n_{\genericstate, \genericaction}\) with probability at least 1-\(\nicefrac{2\failureprob}{3 \cardstates \cardactions}\). By the union bound, if \[\numberofpaths \geq \underset{(\genericstate, \genericaction) \in D}{\max} \max   \left(  \frac{1920 \cardstates^{\beta + 1} \cardactions^{\beta} \processcontinueprob}{(1-\processcontinueprob)^{4-2\beta}\optimalitygap^2} \cdot \frac{\occupancymeasure^{\targetpolicy}(\genericstate, \genericaction)^{2\beta}}{\occupancymeasure^{\behaviorpolicy}(\genericstate, \genericaction)} \log\left(\frac{ \processcontinueprob \cardstates \cardactions}{(1-\processcontinueprob)^{3}\optimalitygap}\right) \log\left( \frac{5\cardstates\cardactions}{ \failureprob} \right), \frac{6}{\processcontinueprob \reachprob^{\behaviorpolicy}(\genericstate, \genericaction)} \log\left(\frac{3\cardstates \cardactions}{\failureprob}\right) \right),\] then with probability at least 1-\(\nicefrac{2\failureprob}{3 \cardstates \cardactions}\), the number of samples from every \((\genericstate, \genericaction) \in D\) to \(\states \setminus \lbrace \deadstate \rbrace\) is at least \(n_{\genericstate, \genericaction}\). 
Consequently, by the union bound, \[|\valuefunction{\targetpolicy}_{\mdp}(\initialstate) - \valuefunction{\targetpolicy}_{\hat{\mdp}}(\initialstate)| \leq \optimalitygap \] with probability at least \(1-\failureprob\).
\end{proof}

\newpage

\begin{proof}[Proof of Corollary \ref{corollary:numberofrequiredpathsoptimization}]
By noting that \(\occupancymeasure^{\policy^{*}}(\genericstate, \genericaction), \occupancymeasure^{\policy'}(\genericstate, \genericaction) \leq \nicefrac{1}{(1-\processcontinueprob)}\) and Theorem \ref{thm:numberofrequiredpaths}, if 
\[\numberofpaths \geq \underset{\substack{\genericstate \in \states' \\ \genericaction \in \actions}}{\max} \max   \left(  \frac{7680 \cardstates\processcontinueprob}{(1-\processcontinueprob)^{5}\optimalitygap^2} \cdot \frac{1}{\occupancymeasure^{\behaviorpolicy}(\genericstate, \genericaction)} \log\left(\frac{ \processcontinueprob}{(1-\processcontinueprob)^{3}\optimalitygap}\right) \log\left( \frac{10\cardstates\cardactions}{ \failureprob} \right), \frac{6}{\processcontinueprob \reachprob^{\behaviorpolicy}(\genericstate, \genericaction)} \log\left(\frac{6\cardstates \cardactions}{\failureprob}\right) \right),\] then \[|\valuefunction{\policy^{*}}_{\mdp}(\initialstate) - \valuefunction{\policy^{*}}_{\hat{\mdp}}(\initialstate)| \leq \nicefrac{\optimalitygap}{2} \] and \[|\valuefunction{\policy'}_{\mdp}(\initialstate) - \valuefunction{\policy'}_{\hat{\mdp}}(\initialstate)| \leq \nicefrac{\optimalitygap}{2} \] with probability at least \(1-\failureprob\). Since \(\valuefunction{\policy'}_{\hat{\mdp}}(\initialstate) \geq \valuefunction{\policy^{*}}_{\hat{\mdp}}(\initialstate)\) due to the optimality of \(\policy'\) against \(\policy^{*}\) on \(\hat{\mdp}\), \[|\valuefunction{\policy^{*}}_{\mdp}(\initialstate) - \valuefunction{\policy'}_{\mdp}(\initialstate)| \leq \optimalitygap \] with probability at least \(1-\failureprob\).

\end{proof}

\newpage

\section{Notes for Off-Policy Estimation with Vanilla Importance Sampling} \label{appendiximportance}
Let \(\pathdist{\target}\) and \(\pathdist{\behavior}\) denote the distribution of paths under the target and behavior policies, respectively. The quantity \(\mathbb{E}_{\apath\sim\pathdist{\targetpolicy}}\left[l(\apath)\right]\) is equal to the KL divergence \(KL(\pathdist{\target}||\pathdist{\behavior} )\) between the path distributions \(\pathdist{\target}\) and \(\pathdist{\behavior}\). When \(\targetpolicy\) and \(\behaviorpolicy\) are stationary, we have \[KL(\pathdist{\target}||\pathdist{\behavior} ) = \sum_{\genericstate \in \states \setminus \lbrace \deadstate \rbrace} \sum_{\genericaction \in \actions} \occupancymeasure^{\targetpolicy}(\genericstate, \genericaction) \log\left(\frac{{\targetpolicy}(\genericstate, \genericaction)}{{\behaviorpolicy}(\genericstate, \genericaction)}\right).\] By replacing \(\frac{{\targetpolicy}(\genericstate, \genericaction)}{{\behaviorpolicy}(\genericstate, \genericaction)}\) with \(\max_{\substack{\genericstate \in \states \setminus \deadstate \\ \genericaction \in \actions}} \frac{\targetpolicy(\genericstate, \genericaction)}{\behaviorpolicy(\genericstate, \genericaction)}\) and noting that \(\sum_{\genericstate \in \states \setminus \lbrace \deadstate \rbrace} \sum_{\genericaction \in \actions} \occupancymeasure^{\targetpolicy}(\genericstate, \genericaction)\), we get \[\mathbb{E}_{\apath\sim\pathdist{\targetpolicy}}\left[l(\apath)\right] = KL(\pathdist{\target}||\pathdist{\behavior} )  \leq \frac{1}{1-\processcontinueprob} \log\left(\max_{\substack{\genericstate \in \states \setminus \deadstate \\ \genericaction \in \actions}} \frac{\targetpolicy(\genericstate, \genericaction)}{\behaviorpolicy(\genericstate, \genericaction)} \right).\]

We have 
\[\mathrm{Std}_{\apath\sim\pathdist{\targetpolicy}}(l(\apath)) = \sqrt{\mathbb{E}_{\apath\sim\pathdist{\targetpolicy}}\left[l(\apath)^{2}\right] - \mathbb{E}_{\apath\sim\pathdist{\targetpolicy}}\left[l(\apath)\right]^{2}} \leq \sqrt{\mathbb{E}_{\apath\sim\pathdist{\targetpolicy}}\left[l(\apath)^{2}\right]}.\] Let \(\tau(\apath)=\min \lbrace t| \apath= \genericstate_{0}\genericaction_{0} \genericstate_{1} \genericaction_{1}\ldots,  \genericstate_{t} = \deadstate\), i.e., the time index that \(\deadstate\) is reached. Note that every path reaches \(\deadstate\) with probability \(1-\processcontinueprob\) at every time step independently. Consequently, \(\tau(\apath)\) with \(\apath\sim\pathdist{\targetpolicy}\) has geometrical distribution with parameter \(\processcontinueprob\), i.e., \(\Pr(\tau(\apath) = i| \apath\sim\pathdist{\targetpolicy}) = (1-\processcontinueprob)\processcontinueprob^{i-1}\) for all \(i\geq 1\). We have 
\begin{align}
    \mathbb{E}_{\apath\sim\pathdist{\targetpolicy}}\left[l(\apath)^{2}\right] &= \sum_{\apath=\genericstate_{0}\genericaction_{0} \genericstate_{1} \genericaction_{1} \ldots\in Support(\pathdist{\target})}  \Pr(\apath|\targetpolicy) \log^{2}\left(\frac{ \Pr(\apath|\targetpolicy)}{ \Pr(\apath|\behaviorpolicy)}\right)
    \\
    &= \sum_{\apath=\genericstate_{0}\genericaction_{0} \genericstate_{1} \genericaction_{1} \ldots\in Support(\pathdist{\target})}  \Pr(\apath|\targetpolicy) \left( \sum_{t=0}^{\tau(\apath)-1} \log\left(\frac{ \targetpolicy(\genericstate_{t}, \genericaction_{t})}{ \behaviorpolicy(\genericstate_{t}, \genericaction_{t})}\right)\right)^{2}
    \\
    &\leq  \sum_{\apath=\genericstate_{0}\genericaction_{0} \genericstate_{1} \genericaction_{1} \ldots\in Support(\pathdist{\target})}  \Pr(\apath|\targetpolicy) \left( \sum_{t=0}^{\tau(\apath)-1} \log\left(\max_{\substack{\genericstate \in \states \setminus \deadstate \\ \genericaction \in \actions}} \frac{\targetpolicy(\genericstate, \genericaction)}{\behaviorpolicy(\genericstate, \genericaction)} \right) \right)^{2}
    \\
    &=  \sum_{\apath=\genericstate_{0}\genericaction_{0} \genericstate_{1} \genericaction_{1} \ldots\in Support(\pathdist{\target})}  \Pr(\apath|\targetpolicy)  \tau(\apath)^{2} \log^{2}\left(\max_{\substack{\genericstate \in \states \setminus \deadstate \\ \genericaction \in \actions}} \frac{\targetpolicy(\genericstate, \genericaction)}{\behaviorpolicy(\genericstate, \genericaction)} \right) 
    \\
    &=\log^{2}\left(\max_{\substack{\genericstate \in \states \setminus \deadstate \\ \genericaction \in \actions}} \frac{\targetpolicy(\genericstate, \genericaction)}{\behaviorpolicy(\genericstate, \genericaction)} \right) \sum_{\apath=\genericstate_{0}\genericaction_{0} \genericstate_{1} \genericaction_{1} \ldots\in Support(\pathdist{\target})}  \Pr(\apath|\targetpolicy)  \tau(\apath)^{2}
    \\
    &=\log^{2}\left(\max_{\substack{\genericstate \in \states \setminus \deadstate \\ \genericaction \in \actions}} \frac{\targetpolicy(\genericstate, \genericaction)}{\behaviorpolicy(\genericstate, \genericaction)} \right) \sum_{t=1}^{\infty} (1-\processcontinueprob)\processcontinueprob^{i-1} i^2
    \\
    &=\log^{2}\left(\max_{\substack{\genericstate \in \states \setminus \deadstate \\ \genericaction \in \actions}} \frac{\targetpolicy(\genericstate, \genericaction)}{\behaviorpolicy(\genericstate, \genericaction)} \right) \frac{1+\gamma}{(1-\gamma)^2}.
\end{align}
By noting that \(\gamma \leq 1\), we get
\[\mathrm{Std}_{\apath\sim\pathdist{\targetpolicy}}(l(\apath)) \leq \frac{\sqrt{2}}{1-\processcontinueprob} \log\left(\max_{\substack{\genericstate \in \states \setminus \deadstate \\ \genericaction \in \actions}} \frac{\targetpolicy(\genericstate, \genericaction)}{\behaviorpolicy(\genericstate, \genericaction)}\right). \]

\newpage
 \section{Proof of Theorem \ref{thm:mseofdfestimator}}
 \label{appendixdfestimator}
 
 \begin{proof}[Proof of Theorem \ref{thm:mseofdfestimator}]
Let \(E\) denote the event that \(i = d\). The MSE of \(\hat{p}_{i}^{DF}\) is 
\begin{equation} \label{eqn:mseofdfestimator}
    \expectation{\left(\hat{p}_{i}^{DF} - p_{i}\right)^2} = \Pr(\neg E)\expectation{\left(\hat{p}_{i}^{DF} - p_{i}\right)^2 | \neg E} + \Pr(E)\expectation{\left(\hat{p}_{i}^{DF} - p_{i}\right)^2 | E}.
\end{equation}

Also, let \(\hat{p}_{i}^{iF}\) be an estimator that always favors \(i\) such that \[\hat{p}_{i}^{iF} = \frac{N_{i}+\sqrt{N}}{N + \sqrt{N}}.\] The MSE of \(\hat{p}_{i}^{iF}\) is 
\begin{subequations} \label{eqn:mseofifestimator}
\begin{align}
        \expectation{\left(\hat{p}_{i}^{iF} - p_{i}\right)^2} &= \mathrm{Var}\left[ \frac{N_{i} + \sqrt{N}}{N+\sqrt{N}} \right] + \left( \expectation{\frac{N_{i} + \sqrt{N}}{N+\sqrt{N}}} - p_{i} \right)^{2} 
        \\
        &=\frac{N p_{i} (1-p_{i})}{(N+ \sqrt{N})^2} + \left( \frac{N p_{i} + \sqrt{N}}{N+\sqrt{N}} - p_{i} \right)^{2}
        \\
        &=\frac{N (1-p_{i})}{N+\sqrt{N}}.
\end{align}
\end{subequations}
Also, note that 
\begin{subequations} 
\begin{align}
    \expectation{\left(\hat{p}_{i}^{iF} - p_{i}\right)^2} &= \Pr(\neg E)\expectation{\left(\hat{p}_{i}^{iF} - p_{i}\right)^2 | \neg E} + \Pr(E)\expectation{\left(\hat{p}_{i}^{iF} - p_{i}\right)^2 | E}
    \\
    &\geq \Pr(E)\expectation{\left(\hat{p}_{i}^{iF} - p_{i}\right)^2 | E}
        \\
    &= \Pr(E)\expectation{\left(\hat{p}_{i}^{DF} - p_{i}\right)^2 | E} \label{ineq:mseofif}
\end{align}
\end{subequations} where the last step is due to the fact that \(\hat{p}_{i}^{iF} = \hat{p}_{i}^{DF}\) if \(E\) happens.

We first bound \(\Pr(\neg E)\expectation{\left(\hat{p}_{i}^{DF} - p_{i}\right)^2 | \neg E}\) in \eqref{eqn:mseofdfestimator}. Note that \(0 \leq \hat{p}_{i}^{DF}\leq 1\).  Let \(F\) denote the event that \(N_{i} \geq \nicefrac{N}{2}.\) Note that \(\Pr(\neg E) \leq \Pr(\neg F)\) since every time \(F\) happens \(E\) must happen. By the the Chernoff-Hoeffding bound, we get \[\Pr(\neg F) \leq \exp\left(- \frac{(2p_{i}-1)^2N}{12(1-p_{i})} \right).\] Since \(0 \leq \hat{p}_{i}^{DF}\leq 1\), we have \[\expectation{\left(\hat{p}_{i}^{DF} - p_{i}\right)^2 | \neg E} \leq 1.\] Hence, we have \[\Pr(\neg E)\expectation{\left(\hat{p}_{i}^{DF} - p_{i}\right)^2 | \neg E} \leq \exp\left(- \frac{(2p_{i}-1)^2N}{12(1-p_{i})} \right).\]

Combining this with \eqref{ineq:mseofif} in \eqref{eqn:mseofdfestimator}, we get the desired result. 

\end{proof}

\end{document}